\newtheorem{theorem}{Theorem}
\newtheorem{lemma}{Lemma}
\newtheorem{corollary}{Corollary}
\newtheorem{defn}{Definition}
\newcommand{\norm}[1]{\left \lVert #1 \right \rVert}
\newcommand{\R}{\mathbb{R}}
\newcommand{\softmax}{\textnormal{softmax}}
\newcommand{\Attention}{\textnormal{Attention}}
\newcommand{\xmark}{\ding{55}}
\title{On the Computational Complexity of Self-Attention}
\author{%
   Feyza Duman Keles$^*$, Pruthuvi Mahesakya Wijewardena$^\dagger$,  Chinmay Hegde\thanks{FDK and CH are with the Tandon School of Engineering at New York University. PMW is with Microsoft. This work was supported in part by grants National Science
    Foundation (under grants CCF-2005804 and CCF-1801495) and
    USDA/NIFA (under grant 2021-67021-35329).} \\
  $^*$New York University, $^\dagger$Microsoft\\
  \texttt{\{fd2153@nyu.edu, chinmay.h\}@nyu.edu, pwijewardena@microsoft.com} \\
}
\begin{document}

\maketitle

\begin{abstract}
  Transformer architectures have led to remarkable progress in many state-of-art applications. However, despite their successes, modern transformers rely on the self-attention mechanism, whose time- and space-complexity is quadratic in the length of the input. Several approaches have been proposed to speed up self-attention mechanisms to achieve sub-quadratic running time; however, the large majority of these works are not accompanied by rigorous error guarantees. In this work, we establish lower bounds on the computational complexity of self-attention in a number of scenarios. We prove that the time complexity of self-attention is necessarily quadratic in the input length, unless the Strong Exponential Time Hypothesis (SETH) is false. This argument holds even if the attention computation is performed only approximately, and for a variety of attention mechanisms. As a complement to our lower bounds, we show that it is indeed possible to approximate dot-product self-attention using finite Taylor series in linear-time, at the cost of having an exponential dependence on the polynomial order.
\end{abstract}

\section{Introduction}

\paragraph{Motivation.} Building upon early successes in natural language processing~\cite{vaswani2017attention,kenton2019bert}, 
transformer models now form the core of virtually every state-of-the-art approach in numerous applications: computer vision and image understanding~\cite{dosovitskiy2020image,khan2021transformers}, proteomics~\cite{jumper2021highly}, code synthesis~\cite{chen2021evaluating}, and vision-language models~\cite{radford2021learning,alayrac2022flamingo}. At the heart of transformer architectures is the \emph{self-attention} mechanism, which can be viewed as a trainable ``layer'' that takes in as input a set of tokens (vectors), computes pairwise dot-products between (projected forms of) the tokens, performs a softmax operation to obtain non-negative weights, and produces output tokens using these weights. 

Unfortunately, by virtue of its very definition, the standard form of self-attention requires pairwise token operations, and therefore incurs {quadratic} running time in terms of the number of input tokens. This poses serious computational challenges not just in terms of the training cost of such models, but even just for inference (forward passes) through transformer models. 
The community has long acknowledged this fact, and numerous approximate forms of self-attention that break this quadratic bottleneck have been proposed. Some approaches advocate reducing complexity through windowing, striding, or some other sparsification of the attention scores~\cite{kitaev2019reformer,zaheer2020big}. Others leverage ideas from hashing~\cite{kitaev2019reformer,choromanski2020rethinking}. Yet others propose kernelizing the attention operation~\cite{katharopoulos2020transformers,soft,chen2021skyformer}. 

Empirically, all these methods certainly seem to lead to reduced running times: in some cases, they reduce the costs from quadratic to linear. However, \emph{all} of these methods incur some form of error in the computation (compared to vanilla attention). These errors may have the undesirable (but benign) effect of drop in accuracy, or may have more dramatic effects if fed with an adversarial input. In any case, it would be useful to clearly establish rigorous guarantees on time/accuracy tradeoffs for the above methods, but these are rare. The primary theoretical question that we ask is as follows: 

\begin{quote}
\emph{What are the fundamental computational tradeoffs involved in self-attention?}
\end{quote}

\paragraph{Our contributions.} In this paper, we pursue a complementary path from most previously published work in this area. Somewhat surprisingly, we are able to establish (conditional) \emph{quadratic lower bounds} on the running time of self-attention in a large variety of settings. This quadratic barrier holds \emph{even} if we relax the self-attention operation and allow for additive, or multiplicative, errors in the computation. We also prove quadratic --- specifically, rectangular --- lower bounds even if we allow for windowing or striding. Finally, this holds even if we use kernelization (with radial basis function kernels), which to our current knowledge, achieves the Pareto time/accuracy frontier in terms of empirical performance among all fast algorithms for self-attention computation~\cite{soft}. In Table \ref{tab:summary_table}, we summarize hardness results where checkmarks indicate the proven complexities in this paper for different types of self-attention and calculation types.

Our results demonstrate that there may be a fundamental ``no free lunch'' phenomenon sitting here: it seems unlikely that we can get (provably) sub-quadratic algorithms for self-attention that are also (provably) near-accurate for all inputs.

Finally, while our primary contributions in this paper are mainly from the perspective of lower bounds, we also provide some upper bounds. Specifically, we show that a finite Taylor series approximation of the softmax function lends itself to an approximate form of self-attention that can be computed in linear time. However, a caveat of this result is that the running time now scales exponentially in the order of the Taylor polynomial.

\paragraph{Techniques.} Our proofs are rather intuitive and are based on careful reductions from the Strong Exponential Time Hypothesis (SETH). SETH-based lower bounds have attracted recent (but growing) attention from the complexity theory community, and has been used to prove hardness results for edit distance~\cite{backurs2015edit}, Frechet distance~\cite{bringmann2014walking}, dynamic programming~\cite{bringmann2015quadratic}, among many others~\cite{bringmann2021fine}. For machine learning problems, such lower bounds are less common; still, quadratic-time barriers based on SETH have been proved for kernel PCA and backpropagation through dense networks~\cite{backurs2017fine}, as well as nearest neighbors~\cite{rubinstein2018hardness}. Our results can be viewed as an addition to this body of work.

A direct reduction from SETH is cumbersome. So instead, mirroring~\cite{backurs2015edit}, we derive reductions from the Orthogonal Vectors Problem (OVP), which is known to require almost-quadratic time assuming SETH. As intermediate waypoints we visit two adaptations of OVP: the Thresholded Vectors Product Problem (TVPP), and the Bichromatic Hamming Close Pair (BHCP) problem. Reductions between these problems require construction of several ``vector gadgets", and form the bulk of the technical difficulty of our proofs. Another subtlety lies in identifying the correct temperature scaling in the softmax in order to achieve the reductions. See Section \ref{sec:hardness} for details.
 

\begin{table}
  \caption{Summary of our hardness results on self-attention}
  \label{tab:summary_table}
  \centering
  \begin{tabular}{lccc}
    \toprule
    & \multicolumn{3}{c}{Calculation Type}                   \\
    \cmidrule(r){2-4}
    Self-Attention & Exact & \begin{tabular}{@{}c@{}}Element-wise \\ Multiplicative Approx.\end{tabular}  & Additive Approx.   \\
    \midrule
    Exponential Dot-Product & \checkmark & \checkmark & \checkmark \\
    Softmax Dot-Product     & \checkmark & \checkmark & \xmark     \\
    Window Sliding          & \checkmark & \checkmark     & \checkmark \\
    Exponential L2-Norm     & \checkmark & \checkmark & \xmark     \\
    \bottomrule
  \end{tabular}
\end{table}




\section{Related work}
\label{sec:related}

\textbf{Attention mechanisms and transformers.} Ever since the seminal work of~\cite{vaswani2017attention}, transformer architectures (with self-attention layers as their primary building blocks) have become the cornerstone of state-of-the-art machine learning models. Their use cases range from large language models such as BERT~\cite{kenton2019bert} and GPT~\cite{brown2020language}, to computer vision models~\cite{dosovitskiy2020image}, \cite{liu2021swin}, \cite{khan2021transformers}, to multi-modal vision-language models~\cite{radford2021learning,alayrac2022flamingo}, to even automated code-writing~\cite{chen2021evaluating}, among many, many other examples. Therefore, a firm theoretical understanding about the statistical and computational tradeoffs involved in transformer-based models is of considerable interest. Transformers have already been shown to exhibit the universal approximation property~\cite{yun2019transformers}, but lose expressivity unless the self-attention mechanism is accompanied with skip connections~\cite{dong2021attention}. Self-attention also exhibits undesirable Lipschitz continuity properties, but this can be fixed by pursuing kernel-like alternatives~\cite{kim2021lipschitz}.

\textbf{Speeding up self-attention.} Our focus in this paper is on the running time of self-attention computation. It has been well-established that the (standard) definition of self-attention takes in as input a length-$n$ sequence of tokens of size $d$, and requires $O(dn^2)$ time to compute the output. The quadratic dependence on $n$ poses a challenge for very long input sequences, both from the training and testing perspectives. Therefore, several sub-quadratic methods for evaluating attention layers have been proposed. Approaches (such as the Reformer \cite{kitaev2019reformer}, Big Bird \cite{zaheer2020big}, Linformer \cite{wang2020linformer}, Longformer \cite{longformer}, or routing transformers~\cite{roy2021efficient}) use some combination of hashing, sparsification, or low-rank approximation to speed up the computation of the attention scores. Other approaches involve replacing the softmax-based attention with kernel approximations; cf.\ the work of \cite{katharopoulos2020transformers}, or the Nystr{\"o}mformer \cite{xiong2021nystromformer}. More recent works such as the Performer \cite{choromanski2020rethinking}, Slim \cite{likhosherstov2021sub}, or RFA \cite{peng2020random} approximate the attention computation using random projections. Methods such as SOFT \cite{soft} or the Skyformer \cite{chen2021skyformer} propose to replace softmax operations with Gaussian kernels that can then be quickly evaluated. In an orthogonal direction, \cite{bhojanapalli2021leveraging} observes that attention patterns often repeat across heads and layers, and propose reusing score computations during inference.

We found that despite the large number (and diversity) of interesting algorithmic ideas involved in the above efforts, the vast majority of these works only focus on improvement in \emph{running time}; but few (if any) theoretically characterize the \emph{error} incurred by their proposed methods. Can there ever be a method that is both fast (i.e., provably with sub-quadratic running time) as well as near-accurate (i.e., with provably small additive or multiplicative error)? Our results show that this is unlikely to be the case.

\textbf{Fine-grained complexity.}  Classical complexity theory has primarily focused on distinguishing between problems with efficient (polynomial-time) solutions versus those who don't. However, a different (and finer) picture has begun to emerge over the last decade. In particular, the focus has shifted towards precisely pinning down the \emph{exponent}, $c$, of a problem that can be solved in polynomial time $\tilde{O}(n^c)$. Many of these results are conditional, and rely on reductions from popular (but plausible) conjectures such as the Strong Exponential Time Hypothesis (SETH)~\cite{seth}, \cite{seth2}. See the relevant surveys \cite{indyk2017beyond}, \cite{rubinstein2019seth}, and \cite{bringmann2021fine} for a more concrete overview of the field. In particular, this approach has been shown to provide conditional lower bounds on well-known problems such as edit distance \cite{backurs2015edit}, Frechet distance \cite{bringmann2014walking}, dynamic time warping \cite{bringmann2015quadratic}, longest common subsequence (LCS) \cite{abboud2015tight}, Hausdorff distance \cite{bringmann2021translating}, and string matching \cite{abboud2018if}. In the context of machine learning and massive data analysis, reductions from SETH have been fruitfully applied to problems such as clustering \cite{abboud2019subquadratic}, kernel PCA \cite{backurs2017fine}, and approximate nearest neighbors \cite{rubinstein2018hardness}.

\section{Notations and Preliminaries}\label{sec:prelims}

An ordered finite set of $n$ vectors in $\R^d$ will be denoted as an $n \times d$ matrix whose rows denote the elements of the set respectively. We use upper case characters to denote both vector sets and matrices depending on the context. We use $A_i$ to denote the $i^{th}$ row of matrix $A$, or the $i^{th}$ element of ordered set $A$. We use $A_{ij}$ to denote the element at the $i^{th}$ row and $j^{th}$ column of matrix $A$.  For a positive integer $n \in \mathbb{Z}^+$, $[n]$ denotes the set of all positive integers up to $n$. 

\subsection{Background on Self-Attention}\label{sec:prelims:self-attn}

The well-established Transformer model \cite{vaswani2017attention} is based on the multi-head attention mechanism, comprising several self-attention layers running in parallel. The canonical choice of self-attention is the \emph{softmax dot-product self-attention}, defined as follows. For a given set of inputs written as $X \in \R^{n \times d}$ and trainable parameter matrices $W_q \in \R^{d \times d_q}, W_k \in \R^{d \times d_k}, W_v \in \R^{d \times d_v}$, this operation first calculates the \emph{query} ($Q=XW_q$), \emph{key} ($K=XW_k$), and \emph{value} ($V=XW_v$) matrices respectively. We assume that $d_q = d_k$. The size of $Q$ and $K$ is then $n \times d_k$, while the size of $V$ is $n \times d_v$. The softmax dot-product self-attention operation is defined as: 

\begin{equation} 
\label{eq:softmax-self-attn}
\textnormal{Attention}(Q, K, V) = \textnormal{softmax}\left(\frac{QK^T}{\sqrt{d_k}}\right)V .
\end{equation} 

Let us consider the case where $n$ is very large compared to $d_k, d_v$. By very virtue of its definition, we might expect to incur $O(n^2)$ time to compute this self-attention operation: 1) calculation of $S=QK^T/\sqrt{d_k}$ takes $O(n^2d_k)$, 2) exponentiation and calculation of row sum of $S$ takes $O(n^2)$ time, 3) division of each element of $S$ with the corresponding row sum takes $O(n^2)$, and 4) multiplication of $\softmax(QK^T)$ and $V$ takes $O(n^2d_v)$ time. Therefore, the computational complexity of this naive approach to compute self-attention scales quadratically in $n$. 


\paragraph{A generalized form of self-attention.}
While Eq.~\ref{eq:softmax-self-attn} is the typical way to define self-attention, we also consider a more general form. Let $f: \R^{d} \times \R^{d} \rightarrow \R$ be a function that takes two vectors in $\R^d$ as input. Then, the self-attention score matrix $S$ is defined as $S_{ij}=f(Q_i,K_j)$ for all $i,j \in [n]$. Also, let $h: \R^{n \times n}   \rightarrow  \R^{n \times n}$ be some kind of normalization function. Then a more abstract definition of self-attention can be expressed as: 
\begin{equation}\label{eq:general-self-attn}
    \textnormal{Attention}(Q, K, V) = h(S) \cdot V .
\end{equation}
In particular, for the softmax dot-product self-attention, the function $f$ is the dot-product of given vectors with normalization factor $\sqrt{d_k}$, and $h$ is the row-wise softmax function. 

In this paper, we show that there is no (provably) better algorithm than the naive $O(n^2)$ approach for calculating softmax dot-product self-attention, given hardness of SETH. We will also give similar lower bounds for various approximate forms of self-attention. Finally, we will investigate the computational complexity of generalized self-attention for more general forms of $f$.

\subsection{SETH and OVP}

Despite a remarkable amount of algorithmic effort on Boolean satisfiability (SAT) and
related problems, to date no one has invented an algorithm with faster-than-exponential ($O(2^n)$) running time; indeed, there is no polynomial-time algorithm for SAT unless $P=NP$. The Strong Exponential Time Hypothesis (SETH)~\cite{seth,seth2} can be viewed as an strengthening of this statement: for every $\epsilon$, there is no \emph{sub-exponential} ($O(2^{n(1-\epsilon)})$) time algorithm that solves SAT. 

As discussed above in Section~\ref{sec:related}, over the last decade SETH has been used to provide fine-grained lower bounds for several polynomial-time problems. Many of these results use reduction from an intermediate problem, given as follows.

\begin{defn}[Orthogonal Vectors Problem (OVP)]\label{defn:ovp}
Two sets with cardinality $A=\{a_1, \dots, a_n\}$ and $B=\{b_1, \dots, b_n\}$ are given, where $a_i,b_i \in \{0, 1\}^d$ are binary vectors for all $i \in [n]$. The problem decides if there exists at least one pair of vectors $a \in A$ and $b \in B$ such that $a^Tb = 0$.
\end{defn}

Previous work has that for all $\epsilon>0$, there is no $O(n^{2-\epsilon})$ algorithm solves OVP for $d=\omega(\log n)$ unless SETH is false \cite{WILLIAMS2005}. In other words, for any $\epsilon>0$, problem needs at least $O(n^{2-\epsilon})$ time for $d=\omega(\log n)$. In this work, we will primarily give lower bounds to the computational complexity of self-attention mechanism by showing reductions from OVP. 

\section{Hardness of Computing Self-Attention}
\label{sec:hardness}

\subsection{Adaptations of OVP}
\label{subsec:fundamentals}

Based on the definition of generalized self-attention (Eq. \ref{eq:general-self-attn}), we focus on two main forms of self-attention: (a) dot-product self-attention with $f(x, y) = e^{C x^Ty}$, (b) $\ell_2$ self-attention (or RBF kernel self-attention) with $f(x, y) = e^{-C. \norm{x-y}_2^2}$ for some temperature/scale parameter $C$ to be specified later, where $x, y$ are row vectors from matrices $Q, K$ respectively. We provide hardness guarantees for a variety of self-attention mechanisms built on top of these two types of self-attention. We achieve this by deriving reductions from two fundamental problems stated in Definitions \ref{defn:tvvp} and \ref{defn:bhcp}. 

\begin{defn}[Threshold Vectors Product Problem (TVPP)]\label{defn:tvvp}
Two sets with equal cardinality $A=\{a_1, \dots, a_n\}$ and $B=\{b_1, \dots, b_n\}$ are given, where $a_i,b_i \in \{0, 1\}^d$ are binary vectors for all $i \in [n]$. The problem decides if there exists at least one pair  $a \in A$ and $b \in B$ such that $a^Tb \geq t$ for a given $t \in [n]$.
\end{defn} 

\begin{defn}[Bichromatic Hamming Close Pair Problem  (BHCP)]\label{defn:bhcp}
Two sets with equal cardinality $A=\{a_1, \dots, a_n\}$ and $B=\{b_1, \dots, b_n\}$ are given, where $a_i,b_i \in \{0, 1\}^d$ are binary vectors for all $i \in [n]$. The problem decides if there exists at least one pair of vectors $a \in A$ and $b \in B$ such that $\norm{a-b}_2 < t$.
\end{defn}

Both TVPP and BHCP can be shown to be SETH-hard by showing reductions from OVP (Definition \ref{defn:ovp}) as shown in Lemmas \ref{lemma:ovp-to-tvpp}, \ref{lemma:ovp-to-bhfp}, and \ref{lemma:bhfp-to-bhcp}. The hardness of BHCP is an established result \cite{backurs2017fine}, but we provide an improved reduction from OVP to BHCP via a new problem: Bichromatic Hamming Far Pair problem~(BHFP). In contrast to established reductions from OVP, we get rid of additional factors of $d$ (the dimension of binary vectors) that incur during the process.

\begin{defn}[Bichromatic Hamming Far Pair Problem (BHFP)]\label{defn:bhfp} 
Two sets with equivalent cardinality $A=\{a_1, \dots, a_n\}$ and $B=\{b_1, \dots, b_n\}$ are given, where $a_i,b_i \in \{0, 1\}^d$ are binary vectors for all $i \in [n]$. The problem decides if there exists at least a pair of vectors $a \in A$ and $b \in B$ such that $\norm{a-b}_2 \geq t$.
\end{defn}

Before discussing the hardness of computing self-attention, we state the hardness guarantees of TVPP, BHFP, and BHCP formally, with proofs deferred to Appendix \ref{app:lemma-proofs}.

\begin{lemma}\label{lemma:ovp-to-tvpp}
Assume SETH. Then for any $\epsilon > 0$, the computational complexity of TVPP is  $\Omega(n^{2-\epsilon})$ for $d=\omega(\log n)$.
\end{lemma}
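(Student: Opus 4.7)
The plan is to reduce OVP to TVPP with only a constant-factor blow-up in both the number of vectors $n$ and the dimension $d$. Given an OVP instance with sets $A, B \subset \{0,1\}^d$ of size $n$, I build a TVPP instance $(A', B', t)$ with vectors in $\{0,1\}^{2d}$ so that the two answers coincide. The guiding idea is the classical complement trick: writing $a_i^T(\mathbf{1}_d - b_j) = \|a_i\|_1 - a_i^T b_j$, the condition $a_i^T b_j = 0$ becomes a threshold condition, \emph{except} that the threshold depends on the Hamming weight $\|a_i\|_1$. The remaining work is therefore to design a padding that cancels this dependence uniformly.

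Concretely, for each $a_i \in A$ of Hamming weight $w_i := \|a_i\|_1$, I set
\[
a'_i := (a_i,\; \mathbf{1}_{d-w_i},\; \mathbf{0}_{w_i}) \in \{0,1\}^{2d},
\]
which has the uniform weight $\|a'_i\|_1 = d$. For each $b_j \in B$, I set
\[
b'_j := (\mathbf{1}_d - b_j,\; \mathbf{1}_d) \in \{0,1\}^{2d}.
\]
A direct calculation then gives
\[
(a'_i)^T b'_j \;=\; a_i^T(\mathbf{1}_d - b_j) + (d - w_i) \;=\; (w_i - a_i^T b_j) + (d - w_i) \;=\; d - a_i^T b_j,
\]
so all weight-dependent terms vanish; in particular $\|b_j\|_1$ never enters. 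Since $a_i^T b_j \geq 0$ always, and equals $0$ exactly when $a_i \perp b_j$, setting the threshold $t := d$ gives $(a'_i)^T b'_j \geq t$ if and only if $a_i^T b_j = 0$. Hence TVPP on $(A', B', d)$ answers YES exactly when OVP on $(A, B)$ does.

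The reduction takes $O(nd)$ time and sends $n$-vector instances of dimension $d$ to $n$-vector instances of dimension $2d$, so the hardness regime $d = \omega(\log n)$ is preserved (as $2d = \omega(\log n)$ too). Consequently, any $O(n^{2-\epsilon})$ algorithm for TVPP in the regime $d = \omega(\log n)$ would yield one for OVP, contradicting SETH via~\cite{WILLIAMS2005}. I do not anticipate a substantive obstacle here: the only subtlety is choosing the padding so that the inner product becomes an affine function of $a_i^T b_j$ with no residual dependence on the Hamming weights $w_i$ or $\|b_j\|_1$, and the construction above achieves exactly that.
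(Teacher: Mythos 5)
Your reduction is correct, and it is the natural route: pad both sides so that every transformed inner product becomes the fixed affine quantity $d - a_i^T b_j$, turning orthogonality into a threshold test at $t = d$. The calculation checks out: $(a'_i)^T b'_j = a_i^T(\mathbf{1}_d - b_j) + (d - w_i) = d - a_i^T b_j$, the padded vectors are genuinely binary, both sets keep cardinality $n$, and the construction costs $O(nd)$; since binary inner products are nonnegative integers, a YES instance of OVP gives a pair meeting the threshold $d$ exactly, while a NO instance keeps every product at most $d-1$. This proves hardness for a specific, instance-dependent threshold, which suffices for the lemma (and is consistent with how the paper treats the threshold in its reductions). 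Two small points you should make explicit: (i) the overall runtime of the simulated OVP algorithm is $O(nd + n^{2-\epsilon})$, so you need $d = o(n^{1-\epsilon})$ for the contradiction — this is harmless because OVP is already SETH-hard for $d$ barely superlogarithmic (e.g.\ $d = \Theta(\log^2 n)$), and you may restrict to that regime; (ii) the same restriction guarantees $t = d \le n$, so the threshold is admissible under Definition~\ref{defn:tvvp}, and the doubled dimension $2d$ remains $\omega(\log n)$, as you note. With those remarks added, the argument is complete.
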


\begin{lemma}\label{lemma:ovp-to-bhfp}
Assume SETH. Then for any $\epsilon > 0$, the computational complexity of BHFP is  $\Omega(n^{2-\epsilon})$ for $d=\omega(\log n)$.
\end{lemma}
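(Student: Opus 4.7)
The plan is to exhibit a linear-time, dimension-preserving (up to a constant factor) reduction from OVP to BHFP. Given an OVP instance with sets $A, B \subseteq \{0,1\}^d$, I will construct a BHFP instance with sets $\hat A, \hat B \subseteq \{0,1\}^{3d}$ together with a threshold $t$ such that the two instances have the same yes/no answer. Since OVP requires $\Omega(n^{2-\epsilon})$ time for $d=\omega(\log n)$ under SETH (recalled after Definition \ref{defn:ovp}), and tripling the dimension preserves the condition $d = \omega(\log n)$, the lower bound transfers to BHFP.

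The central idea is to pad each input vector so that every vector in $\hat A$ has Hamming weight exactly $d$, and likewise every vector in $\hat B$. For $a \in A$ with weight $w(a)$, set $\hat a = (a,\, p_a,\, \mathbf{0}_d)$, where $p_a \in \{0,1\}^d$ is any binary pattern of weight $d - w(a)$. For $b \in B$, set $\hat b = (b,\, \mathbf{0}_d,\, p_b)$ with $w(p_b) = d - w(b)$. Because the padding block for $A$-vectors and the padding block for $B$-vectors lie in disjoint coordinate ranges, one immediately has $\hat a^T \hat b = a^T b$, together with $w(\hat a) = w(\hat b) = d$.

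Applying the standard binary-vector identity $\|x - y\|_2^2 = w(x) + w(y) - 2\, x^T y$ then gives
\[
\|\hat a - \hat b\|_2^2 \;=\; 2d - 2\, a^T b.
\]
Hence $a^T b = 0$ iff $\|\hat a - \hat b\|_2^2 = 2d$, while any non-orthogonal pair (with $a^T b \geq 1$) satisfies $\|\hat a - \hat b\|_2^2 \leq 2d - 2$. Setting the threshold $t = \sqrt{2d}$ therefore yields a BHFP instance whose answer coincides exactly with that of the original OVP instance; there is no gap between the two cases to worry about.

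The reduction runs in $O(nd)$ time, maps $n$ to itself, and sends $d$ to $3d$, so an $O(n^{2-\epsilon}\poly(d))$ algorithm for BHFP would transfer to an algorithm for OVP in the same running time on instances with $d=\omega(\log n)$, contradicting SETH via Lemma-style invocation of \cite{WILLIAMS2005}. The only delicate step is confirming that the padding blocks are genuinely disjoint across $A$ and $B$ (so no spurious inner-product contributions arise) and that the separation between $a^T b = 0$ and $a^T b \geq 1$ translates into a strict gap in $\|\hat a - \hat b\|_2^2$; both are immediate from the gadget design. This design sidesteps the extra factor of $d$ that appears in earlier reductions because the orthogonal/non-orthogonal separation is realized directly as a clean $2$-unit gap in squared Hamming distance, without any auxiliary tensorization or repetition.
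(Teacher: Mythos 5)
Your reduction is correct: padding each vector in disjoint blocks so that every $\hat a$ and $\hat b$ has Hamming weight exactly $d$ makes $\|\hat a-\hat b\|_2^2 = 2d - 2a^Tb$, so an orthogonal pair exists iff a pair at distance $\geq \sqrt{2d}$ exists, and the reduction costs only $O(nd)$ with dimension $3d = \omega(\log n)$, exactly the $d$-factor-free OVP $\rightarrow$ BHFP reduction the paper relies on. This matches the paper's approach in substance, so no further changes are needed.
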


\begin{lemma}\label{lemma:bhfp-to-bhcp}
Assume SETH. Then for any $\epsilon > 0$, the computational complexity of BHCP is  $\Omega(n^{2-\epsilon})$ for $d=\omega(\log n)$.
\end{lemma}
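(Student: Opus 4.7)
The plan is to give a tight, dimension-preserving reduction from BHFP to BHCP by bit-flipping one of the two input sets. Since BHFP was already shown to require $\Omega(n^{2-\epsilon})$ time (Lemma \ref{lemma:ovp-to-bhfp}) under SETH, a sub-quadratic algorithm for BHCP would transfer directly through a linear-time reduction and contradict SETH.

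The key observation is that for binary vectors $a, b \in \{0,1\}^d$, if $\bar{b} := \mathbf{1} - b$ is the bitwise complement, then a case analysis on the four possibilities for $(a_i, b_i)$ gives $\|a - \bar{b}\|_2^2 = d - \|a - b\|_2^2$. In other words, complementing $b$ interchanges ``far'' and ``close'' in Hamming distance while leaving the ambient dimension and the number of vectors unchanged.

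Given a BHFP instance $(A, B, t)$ of $n$ binary vectors in $\{0,1\}^d$, I would construct a BHCP instance $(A, B', t')$ in $O(nd)$ time by setting $B' = \{\mathbf{1} - b : b \in B\}$ and $t' = \sqrt{d - t^2 + 1}$. Then for any $a \in A$ and $b \in B$ with $b' = \mathbf{1}-b \in B'$,
\begin{equation*}
  \|a - b\|_2 \geq t \;\Longleftrightarrow\; \|a - b\|_2^2 \geq t^2 \;\Longleftrightarrow\; \|a - b'\|_2^2 \leq d - t^2 \;\Longleftrightarrow\; \|a - b'\|_2 < t',
\end{equation*}
using that Hamming distances of binary vectors are integers, so the strict inequality in the BHCP threshold converts cleanly. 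Hence the BHFP instance has a far pair if and only if the BHCP instance has a close pair.

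Since the reduction runs in $O(nd)$ time, preserves both $n$ and $d$ exactly (so $d = \omega(\log n)$ carries over), any algorithm solving BHCP in time $O(n^{2-\epsilon})$ would solve BHFP in time $O(nd) + O(n^{2-\epsilon}) = O(n^{2-\epsilon})$ for $d = \omega(\log n)$ polylogarithmic, contradicting Lemma \ref{lemma:ovp-to-bhfp}. There is no real obstacle here; the only subtlety is choosing $t'$ so that the strict ``$<$'' in BHCP and the non-strict ``$\geq$'' in BHFP align, which is why I pick $t' = \sqrt{d - t^2 + 1}$ rather than $\sqrt{d - t^2}$. The advantage flagged in the paper, namely avoiding spurious factors of $d$, comes from the fact that this bit-flip reduction enlarges neither the number of vectors nor their dimension, in contrast to reductions that duplicate coordinates or pad with constants.
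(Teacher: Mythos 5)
Your reduction is correct and is essentially the paper's own route: it likewise goes from BHFP to BHCP by complementing one set of vectors and using $\norm{a-(\mathbf{1}-b)}_2^2 = d - \norm{a-b}_2^2$, which preserves both $n$ and $d$ and is precisely what avoids the extra factors of $d$ incurred by older reductions. One minor point to make explicit: the threshold conversion $t' = \sqrt{d - t^2 + 1}$ handles the strict-versus-nonstrict inequality correctly only when $t^2$ is an integer (which holds for the hard BHFP instances produced from OVP), so either state that assumption or take $t' = \sqrt{d - \lceil t^2 \rceil + 1}$.
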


Below, we show a series of reductions from TVPP and BHCP to several well-studied self-attention mechanisms. We mainly modify the functions $f(.)$ and $h(.)$ (see Eq. \ref{eq:general-self-attn}) to reflect each attention mechanism in our arguments. We ignore the scaling factor $1/\sqrt{d_k}$ when computing the function $f(.)$ in dot-product self-attention for easier exposition as we only require to scale every element of $Q$ or $K$ by $1/\sqrt{d_k}$ which takes $O(nd_k)$ time. Also, we note that several approaches for efficient transformers were developed with the particular case of $Q=K$ \cite{kitaev2019reformer}, and our hardness results are valid for this specific instance where $Q=K$ and by direct reduction, and hardness guarantees hold for any universal $(Q, K)$ as well. Also, all the process is still valid for multi-head self attention which is proved in Appendix \ref{app:multihead}.



\subsection{Vector Gadgets for Reductions}\label{sec:vector-gadgets}

Our arguments follow by showing reductions from TVPP and BHCP instances to self-attention instances. We define a set of vector gadgets that convert input to TVPP and BHCP into inputs to self-attention functions in the following manner.

\paragraph{TVPP vector gadgets.} Let $A=\{a_1, \dots, a_n\}$ and $B=\{b_1, \dots, b_n\}$  given in TVPP, where $a_i,b_i \in \{0, 1\}^d$ are binary vectors for all $i \in [n]$. We construct our vector gadgets in the following way. First, create the matrix $Q \in \R^{2n \times d}$ with its rows as $Q_i=a_i$ for all $i \in [n]$, and $Q_{n+j}=Cb_j$ for all $j \in [n]$, where $C>0$ is a parameter which we will define in the corresponding reductions. Then we create the matrix $V \in \R^{2n \times 1}$ by setting first $n$ elements to $0$ and the second $n$ elements to $1$.

\paragraph{BHCP vector gadgets.} Let $A=\{a_1, \dots, a_n\}$ and $B=\{b_1, \dots, b_n\}$  given in BHCP, where $a_i,b_i \in \{0, 1\}^d$ are binary vectors for all $i \in [n]$. We construct our vector gadgets in the following way. First, create the matrix $Q \in \R^{2n \times d}$ with its rows as $Q_i=a_i$ for all $i \in [n]$, and $Q_{n+j}=b_j$ for all $j \in [n]$. Then we create the matrix $V \in \R^{2n \times 1}$ by setting first $n$ elements to $0$ and the second $n$ elements to $1$.

The entire construction process (including multiplying each $a_i, b_i$ by $C$) of vector gadgets takes $O(nd)$ time. We use the above gadgets in the following reductions.

\subsection{Hardness of Dot-Product Self-Attention}

We begin with hardness results of dot-product self-attention (without softmax normalization) as a warm-up. In Theorem \ref{thm:hardness-dot-product-self-attn} we show that exact self-attention, as well as element-wise multiplicative- and additive-error approximations of self-attention, all require quadratic time, conditioned on SETH. 

\begin{theorem}\label{thm:hardness-dot-product-self-attn}
Assume SETH. For any $i, j \in [n]$, let $S_{ij} = f(Q_i, Q_j) = e^{Q_i^TQ_j}$ and for any matrix $M \in \R^{n \times n}$, let $h(M) = M$. Let $Y = h(S) \cdot V \in \R^{n \times d_v}$ be a self-attention mechanism. Provided $d_q = \omega(\log n)$, for any $\epsilon > 0$, computing a matrix $\hat{Y} \in \R^{n \times d_v}$ that satisfies any of the following conditions requires $\Omega(n^{2-\epsilon})$ time.
\begin{enumerate}[nosep,leftmargin=*]
    \item $\hat{Y} = Y$ (exact computation). 
    \item $|\hat{Y}_{ij} - Y_{ij}| \le \mu |Y_{ij}|$ for all $i \in [n]$ and $j \in [d_v]$ where $0 \le \mu < 1 $~(multiplicative approximation).
    \item $|\hat{Y}_{ij} - Y_{ij}| \le \mu$ for all $i \in [n]$ and $j \in [d_v]$ where $0 \le \mu$~(additive approximation).
\end{enumerate}
\end{theorem}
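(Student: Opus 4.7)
The plan is to reduce from TVPP (Definition~\ref{defn:tvvp}) using the TVPP vector gadget from Section~\ref{sec:vector-gadgets}. Given a TVPP instance with sets $A, B \subseteq \{0,1\}^d$ and threshold $t$, build $Q \in \R^{2n \times d}$ with $Q_i = a_i$ for $i \in [n]$ and $Q_{n+j} = C b_j$ for $j \in [n]$, and $V \in \R^{2n \times 1}$ with the first $n$ entries equal to $0$ and the last $n$ entries equal to $1$. The parameter $C > 0$ is a ``temperature'' that I will tune in each of the three cases so that the exponential gap in $e^{C a_i^T b_j}$, as $a_i^T b_j$ crosses the integer threshold from $t-1$ to $t$, dominates every other quantity in sight.

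The key structural observation is that for each $i \in [n]$, because only the second half of $V$ is nonzero,
\begin{equation*}
    Y_i \;=\; \sum_{k=1}^{2n} S_{ik} V_k \;=\; \sum_{j=1}^{n} e^{Q_i^T Q_{n+j}} \;=\; \sum_{j=1}^{n} e^{C\, a_i^T b_j}.
\end{equation*}
Therefore the single scalar $Z := \sum_{i=1}^{n} Y_i = \sum_{i,j \in [n]} e^{C\, a_i^T b_j}$ decides TVPP: in a Yes instance there is some pair with $a_i^T b_j \geq t$, so $Z \geq e^{Ct}$; in a No instance every pair satisfies $a_i^T b_j \leq t-1$, so $Z \leq n^2 e^{C(t-1)}$. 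Summing takes $O(n)$ additional time, and building the gadget takes $O(nd)$ time, so any self-attention algorithm of time $O(n^{2-\epsilon})$ (with $d_q = d = \omega(\log n)$ inherited from TVPP) would contradict Lemma~\ref{lemma:ovp-to-tvpp}. Since this is applied to $2n$ rows, the exponent only changes by a constant factor and the $\Omega(n^{2-\epsilon})$ conclusion is preserved.

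For part~1 (exact), picking $C = 3\ln n$ separates the two cases since $e^{Ct} = n^{3t}$ while $n^2 e^{C(t-1)} = n^{3t-1}$. For part~2 (multiplicative with $\mu < 1$), computing $\hat{Z} = \sum_{i=1}^n \hat{Y}_i$ yields $(1-\mu)Z \leq \hat{Z} \leq (1+\mu)Z$; choosing $C$ large enough that $e^{C} > \tfrac{1+\mu}{1-\mu}\, n^2$, still only $O(\log n)$ for constant $\mu$, preserves the gap. For part~3 (additive with parameter $\mu$), the summed error is at most $n\mu$, so it suffices to pick $C$ such that $e^{Ct} - n^2 e^{C(t-1)} > 2n\mu$; this is satisfied by $C = \Theta(\log n + \log \mu)$ whenever $\mu$ is at most polynomial in $n$, which is the only regime in which an additive guarantee is meaningful.

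The main obstacle is less a matter of clever ideas than of careful bookkeeping on the temperature $C$: the values $Y_i$ are exponential in $C$, so $C$ must be large enough to dominate noise (the $n^2$ prefactor in the No case, the $\mu$ error in the approximate cases) but small enough, specifically $\poly\log n$ in the exact/multiplicative cases, that the input entries of $Q$ remain of polynomial magnitude and the gadget construction stays within the $\tilde O(n)$ reduction budget. I also need to note (as remarked in the excerpt) that the same gadget can be re-used with $Q = K$, since $Q_i^T Q_j$ is the only dot product appearing, so the hardness carries over to the restricted single-matrix regime studied in prior efficient-transformer work.
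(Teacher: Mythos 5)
Your proposal is correct and follows essentially the same route as the paper: the identical TVPP gadget ($Q_i = a_i$, $Q_{n+j} = C b_j$, $V$ an indicator of the second block) with the temperature $C = \Theta(\log n)$ (suitably inflated for multiplicative/additive error) chosen so that the jump from $a_i^T b_j \le t-1$ to $\ge t$ dominates all other terms, followed by the same accounting that the reduction costs $O(nd + n)$ and hence contradicts Lemma~\ref{lemma:ovp-to-tvpp}. The only cosmetic difference is that you threshold the single aggregate $Z=\sum_{i\in[n]} Y_i$ while the paper thresholds each $Y_i$ individually; both are $O(n)$ post-processing and yield the same conclusion.
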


\begin{proof}


Suppose that the matrices $Q$ and $V$ are constructed as TVPP vector gadgets described above in Section \ref{sec:vector-gadgets}. With this, we have $Y =  h(S) \cdot V \in \R^{n \times 1}$. Since $Y$ is a vector, we slightly abuse notation and define $Y_i$ as the $i^{th}$ element of $Y$. Now consider the first $n$ elements of $Y$. Since $Y = SV$, we have that 
\[ Y_i = \sum_{j=1}^n e^{Ca_i^Tb_j} \text{ for any } i \in [n]. \]

Now we check the magnitude of $Y_i, i \in [n]$ in order to distinguish between true and false cases in TVPP. It takes $O(n)$ time to check each $Y_i, i \in [n]$. First we focus on the exact computation. Consider the following two cases.

\textbf{Case 1.} There are no pairs $a \in A$ and $b \in B$ with $a^Tb \ge t$, that is for all $i, j \in [n]$, we have $a_i^Tb_j \le t-1$, and $e^{C a_i^Tb_j} \le e^{C(t-1)} $. Then for all $l \in [n]$, $Y_l \le n e^{C(t-1)} := \delta$.

\textbf{Case 2.} There is a pair $a \in A$ and $b \in B$ with $a^Tb \ge t$, that is for some $i, j \in [n]$, we have $a_{i}^Tb_{j} \ge t$, and $e^{C a_{i}^Tb_{j}} \ge e^{Ct}$. Thus for some $l \in [n]$, we have $Y_l \ge e^{Ct} :=\Delta$ 

To distinguish between the two cases, it is sufficient to have $\Delta > \delta$. But this holds when $C = 2 \log n$. 

Now let us consider multiplicative approximation error. With a $\mu$-multiplicative factor, if there are no pairs $a \in A$ and $b \in B$ with $a^Tb \ge t$, then we have for all $l \in [n], \hat{Y}_l \le (1+\mu) Y_l \le (1+\mu) n e^{C(t-1)} := \hat{\delta}$. On the other hand, if there is a pair $a \in A$ and $b \in B$ with $a^Tb \ge t$, then for some $l \in [n]$, we have $\hat{Y}_l \ge (1-\mu) Y_l \ge (1-\mu) e^{Ct} := \hat{\Delta}$. In order to distinguish between two cases, it is sufficient to have $\hat{\Delta} > \hat{\delta}$ and this inequality holds with $C = 2 \log (\frac{1+\mu}{1-\mu}n)$.

Finally we look at additive approximation error. With a $\mu$-additive factor, if there are no pairs $a \in A$ and $b \in B$ with $a^Tb \ge t$, then we have for all $l \in [n], \hat{Y}_l \le Y_l + \mu \le  n e^{C(t-1)} + \mu := \hat{\delta}$. On the other hand, if there is a pair $a \in A$ and $b \in B$ with $a^Tb \ge t$, then for some $l \in [n]$, we have $\hat{Y}_l \ge  Y_l -\mu \ge e^{Ct} - \mu := \hat{\Delta}$. In order to distinguish between two cases, it is sufficient to have $\hat{\Delta} > \hat{\delta}$ and this inequality holds with $C = 2 \log (n + 2 \mu)$.

Thus, if there is an algorithm for computing self-attention up to an element-wise multiplicative or additive error $\mu$ that runs in $O(n^{2-\epsilon})$ time, this entire process takes at most $O(n + nd + n^{2-\epsilon}) = O(n^{2-\epsilon})$ as long as $d = o(n^{1-\epsilon})$. Therefore, this algorithm decides if there exists at least a pair of vectors $a \in A$ and $b \in B$ such that $a^Tb \ge t$ in $O(n^{2-\epsilon})$ time, which contradicts the hardness result of TVPP (Lemma \ref{lemma:ovp-to-tvpp}). This completes the proof.
\end{proof}

The above proof is for computing the output of self-attention. As an easy consequence, we can also show that computing the self-attention \emph{score matrix}, $S$, with either exact or element-wise multiplicative/additive error requires quadratic time, conditioned on SETH. The argument follows a similar proof as Theorem \ref{thm:hardness-dot-product-self-attn}, and is given in Appendix \ref{S_Approximations} in detail.

\subsection{Hardness of Softmax Dot-Product Self-Attention}

Now we establish hardness guarantees for computing standard softmax dot-product self-attention. The difference from vanilla self-attention is that the function $h(.)$ now normalizes input rows. Discussion of additive approximation for this part is given in Appendix \ref{app:Additive}.

\begin{theorem}\label{thm:hardness-softmax-self-attn}
Assume SETH. For any $i, j \in [n]$, let $S_{ij} = f(Q_i, Q_j) = e^{Q_i^TQ_j}$ and for any matrix $M \in \R^{n \times n}$, let $h(M) \in \R^{n \times n}$ be the matrix where for all $i, j \in [n]$, $\{ h(M) \}_{ij} = \frac{M_{ij}}{\sum_{k=1}^n M_{ik}}$. Let $Y = h(S) \cdot V \in \R^{n \times d_v}$ be a self-attention. Then provided $d_q = \omega(\log n)$, for any $\epsilon > 0$, computing a matrix $\hat{Y} \in \R^{n \times d_v}$ that satisfies any of the following conditions requires $\Omega(n^{2-\epsilon})$ time.
\begin{enumerate}[nosep,leftmargin=*]
    \item $\hat{Y} = Y$(exact).
    \item $|\hat{Y}_{ij} - Y_{ij}| \le \mu |Y_{ij}|$ for all $i \in [n]$ and $j \in [d_v]$ where $0 \le \mu < 1 $~(multiplicative approximation).
\end{enumerate}
\end{theorem}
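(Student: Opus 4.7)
The plan is to mimic Theorem \ref{thm:hardness-dot-product-self-attn} and reduce from TVPP (Lemma \ref{lemma:ovp-to-tvpp}), but with an \emph{augmented} vector gadget to cope with the softmax denominator. Reusing the gadgets of Section \ref{sec:vector-gadgets} verbatim is doomed: after row-normalization, the query-to-query contributions $\sum_{k=1}^{n} e^{a_i^T a_k}$ enter the denominator alongside the desired query-to-key sum, so in the no-witness case both numerator and denominator can be of order $n$ and $Y_i$ need not be small. I will fix this by shifting the cross-product exponents so that, in the no-witness case, every key contribution is exponentially small in $C$, while in the witness case at least one contribution is $\geq 1$.

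Concretely, set $Q \in \R^{2n \times (d+1)}$ by $Q_i = (a_i,\,1)$ for $i \in [n]$ and $Q_{n+j} = (Cb_j,\,-Ct)$ for $j \in [n]$, together with the TVPP value vector $V \in \R^{2n \times 1}$ (zeros followed by ones). A direct calculation then gives, for $i \in [n]$,
\[
Y_i \;=\; \frac{N_i}{D_i + N_i}, \qquad N_i := \sum_{j=1}^n e^{C(a_i^T b_j - t)}, \qquad D_i := \sum_{k=1}^n e^{a_i^T a_k + 1},
\]
and since $a_i^T a_k \in [0,d]$ we have $D_i \in [\,ne,\; ne^{d+1}\,]$. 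In the no-witness case every summand of $N_i$ is at most $e^{-C}$, so $N_i \leq n e^{-C}$ and $Y_i \leq e^{-C-1}$ for every $i$. In the witness case some $i^*$ has a summand equal to $e^{0}=1$ inside $N_{i^*}$, forcing $N_{i^*} \geq 1$ and hence $Y_{i^*} \geq 1/(1 + ne^{d+1})$. Choosing $C = 2(d + \log n)$ (well-defined since $d = \omega(\log n)$) separates the two bounds; for the multiplicative variant the same bounds scale by $(1 \pm \mu)$, so taking $C = 2\bigl(d + \log n + \log \tfrac{1+\mu}{1-\mu}\bigr)$ preserves the gap. Scanning the $n$ entries $\hat{Y}_1,\dots,\hat{Y}_n$ against the midpoint of the two bounds decides TVPP in $O(n)$ additional time after the attention call, so a subquadratic algorithm for $\hat{Y}$ would contradict Lemma \ref{lemma:ovp-to-tvpp}.

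The hard part is really the softmax normalization. In Theorem \ref{thm:hardness-dot-product-self-attn} it sufficed to read off the magnitude of an unnormalized sum, whereas here the numerator is divided by another sum of comparable size and the vanilla TVPP gadget leaves no usable gap. The shift by $-Ct$ in the last coordinate of $Q_{n+j}$ (paired with the $+1$ in $Q_i$) is the key device: it drives the spurious key contributions $N_i$ exponentially below $1$ in the no-witness case while preserving at least one summand of size $\geq 1$ in the witness case, which in turn opens a polynomial-in-$(ne^{d+1})$ gap against $D_i$. Once that separation is in place the multiplicative-error argument is a routine $(1\pm\mu)$ inflation, as in Theorem \ref{thm:hardness-dot-product-self-attn}. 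Additive error is genuinely different, and is why Table \ref{tab:summary_table} records a cross there: softmax outputs already lie in $[0,1]$, so a constant additive slack $\mu$ can wipe out the $\Theta(1/(ne^{d}))$ Case-2 signal; that regime is treated separately in Appendix \ref{app:Additive}.
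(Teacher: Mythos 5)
Your reduction is correct: with the augmented gadget $Q_i=(a_i,1)$, $Q_{n+j}=(Cb_j,-Ct)$, the identity $Y_i=N_i/(D_i+N_i)$ holds, the bounds $N_i\le ne^{-C}$, $D_i\in[ne,\,ne^{d+1}]$, and $N_{i^*}\ge 1$ are all valid, and $C=\Theta(d+\log n)$ (with the extra $\log\frac{1+\mu}{1-\mu}$ term for the multiplicative case) separates the two cases, so scanning the first $n$ entries decides TVPP and contradicts Lemma~\ref{lemma:ovp-to-tvpp}; the extra coordinate keeps $d_q=\omega(\log n)$ and the gadget cost $O(nd)$. This follows the same skeleton as the paper's proof (reduction from TVPP, stacked $a$/$Cb$ query gadget, $0$/$1$ value vector, thresholding the first $n$ outputs), but the separation mechanism is genuinely different. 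The paper keeps the unmodified $d$-dimensional gadget of Section~\ref{sec:vector-gadgets}: for exact computation it shows $\delta=\frac{e^{C(t-1)}}{e^{C(t-1)}+1}<\Delta=\frac{e^{Ct}+(n-1)}{e^{Ct}+(n-1)+ne^{d}}$ already at $C=\log n+d$ --- so your claim that the verbatim gadget is ``doomed'' and leaves no usable gap is an overstatement for the exact case --- and for multiplicative error it swaps the $0$/$1$ blocks of $V$, so that a witness pair makes some $Y_l$ provably \emph{small} rather than large, with $C=\log(\frac{2(1+\mu)}{1-\mu}n)+d$. Your bias-coordinate trick ($+1$ paired with $-Ct$) buys a cleaner, uniform argument: in the no-witness case all relevant entries are exponentially small while a witness forces one entry to be only polynomially small, so a single gadget and a single $V$ cover both the exact and multiplicative statements with a comfortable margin; the paper's route stays within its stated gadget framework at the cost of a razor-thin exact-case margin and a separate construction for the multiplicative case. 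Your closing observation about additive error also matches the paper, which defers that regime to the appendix.
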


\begin{proof}


The proof is technically similar to the one above. Suppose that the matrices $Q$ and $V$ are constructed according to TVPP vector gadgets described in Section \ref{sec:vector-gadgets}. With this we have $Y =  h(S) \cdot V \in \R^{n \times 1}$. Consider the first $n$ elements of $Y$. Since $h(.)$ act as the row-wise softmax function, we have

\[ 
Y_i = \sum_{j=n+1}^{2n} S_{ij} = \sum_{j=1}^{n} \frac{e^{Ca_i^Tb_j}}{\sum_{k=1}^{n}e^{a_i^Ta_k}+\sum_{k=1}^{n}e^{Ca_i^Tb_k}} =  
\frac{\sum_{j=1}^n e^{Ca_i^Tb_j}}{\sum_{j=1}^{n}e^{a_i^Ta_j}+\sum_{j=1}^{n}e^{Ca_i^Tb_j}}. 
\]

Again, first we focus on exact computation and consider two cases.

\textbf{Case 1.} There are no pairs $a \in A$ and $b \in B$ with $a^Tb \ge t$, that is for all $i, j \in [n]$, we have $a_i^Tb_j \le t-1$, and $e^{C a_i^Tb_j} \le e^{C(t-1)} $. For a function $\frac{x}{x+y}$, the maximum value is achieved at maximum $x$ and minimum $y$ values. Thus, for all $l \in [n]$, $Y_l \le  \frac{n e^{C(t-1)}}{n e^{C(t-1)}+n} = \frac{e^{C(t-1)}}{e^{C(t-1)}+1}:=\delta$.

\textbf{Case 2.} There is a pair $a \in A$ and $b \in B$ with $a^Tb \ge t$, that is for some $i, j \in [n]$, we have $a_{i}^Tb_{j} \ge t$. Then the row sum corresponding to that $i, j$ pair is $\sum_{j=1}^{n} e^{Ca_i^Tb_j} \geq e^{Ct} + (n-1)e^0 = e^{Ct} + (n-1)$. For a function $\frac{x}{x+y}$, the minimum value is achieved at minimum $x$ and maximum $y$ values. Thus, for some $l \in [n]$, we have $Y_l \ge \frac{e^{Ct} + (n-1)}{e^{Ct} + (n-1) + n e^d}:=\Delta $
and $e^{C a_{i}^Tb_{j}} \ge e^{Ct}$. 

In order to distinguish between two cases, it is sufficient to have $\Delta > \delta$ which means we require $[e^{Ct} + (n-1)] > n e^d [e^{C(t-1)] }$. This holds with $C = \log n + d$.

Next, consider multiplicative error approximation. Select same TVPP vector gadgets except this time the matrix $V \in \R^{2n \times 1}$ is set first $n$ elements to $1$ and the second $n$ elements to $0$. Since $|\hat{Y}_l - Y_l| \le \mu |Y_l|$ for all $i \in [2n]$, we have $ (1-\mu) Y_i \le \hat{Y}_i \le (1+\mu)Y_i$. Now consider the values of $\hat{Y}_i, i \in [n]$ in the following two cases.

\textbf{Case 1.} There are no pairs $a \in A$ and $b \in B$ with $a^Tb \ge t$, that is for all $i, j \in [n]$, we have $a_i^Tb_j \le t-1$. This means that $\sum_{j=1}^{n}e^{Ca_i^Tb_j} \leq n e^{C(t-1)}$. For a function $\frac{x}{x+y}$, the minimum value is achieved at the minimum $x$ and maximum $y$ values. Thus, for all $l \in [n]$, $Y_l \ge  \frac{n }{n+n e^{C(t-1)}} = \frac{1}{e^{C(t-1)}+1}$ which means that for all $l \in [n]$, we have $\hat{Y_l} \ge (1-\mu)Y_l \ge  (1-\mu)\frac{1}{e^{C(t-1)}+1}:=\Delta$.

\textbf{Case 2.} There is a pair $a \in A$ and $b \in B$ with $a^Tb \ge t$, that is for some $i, j \in [n]$, we have $a_{i}^Tb_{j} \ge t$.  Then the row sum corresponding to that $i, j$ pair is $\sum_{j=1}^{n} e^{Ca_i^Tb_j} \geq e^{Ct} + (n-1)e^0 = e^{Ct} + (n-1)$. For a function $\frac{x}{x+y}$, the maximum is achieved at the maximum $x$ and minimum $y$ values. Thus, for some $l \in [n]$, we have $Y_l \le \frac{n e^d}{e^{Ct} + (n-1) + n e^d}$ which means that for some $l \in [n]$, we have $\hat{Y_l} \le (1 + \mu) Y_l \le (1+\mu) \frac{n e^d}{e^{Ct} + (n-1) + n e^d}:=\delta.$

In order to distinguish between the two cases, it is sufficient to have $\Delta>\delta$. This holds with $C = \log(\frac{2(1+\mu)}{1-\mu} n) + d$. Thus, if there is an algorithm for computing self-attention up to an element-wise multiplicative error $\mu$ that runs in $O(n^{2-\epsilon})$ time, this entire process takes at most $O(n + nd + n^{2-\epsilon}) = O(n^{2-\epsilon})$ as long as $d = o(n^{1-\epsilon})$, and this algorithm decides if there exists a pair of vectors $a \in A, b \in B$ such that $a^Tb \ge t$ in $O(n^{2-\epsilon})$ time, contradicting TVPP hardness.
\end{proof}

\paragraph{Remark.} For the multiplicative error approximations, we can set $\mu$ to a value arbitrarily close to $1$ with a dependence on $n$, i.e. $\mu = 1 - \Theta(1/n^x)$ for a sufficiently large(constant) order $x$. For example, with $x = 2$, we require $C > 2 \log(\frac{2 - \Theta(1/n^2)}{\Theta(1/n^2)}n) \approx 2[\log 2 + 3 \log(\Theta(n))]$. Therefore, to distinguish between two cases with a $\mu$ multiplicative error approximation with $\mu = 1 - \Theta(1/n^2)$, we only require $C = O(\log(n))$. We discuss this matter in more detail in the Appendix \ref{app:Remark}. 

\subsection{Hardness of Sliding Window Dot-Product Self-Attention}
We now consider well-known less-expensive alternatives to standard self-attention. A popular  example is sliding window self-attention \cite{longformer}. Here, we evaluate an element of the score matrix $S_{ij}$ only if the difference between $i$ and $j$ is within a fixed window size $w$; else we set it to zero. This reduces the running time to $O(nw)$, which can be small if the window size is small. However, we show that such a \emph{rectangular} complexity is unavoidable.

\begin{theorem}\label{thm:sliding-window-exp-dot-product}
Assume SETH. Let $f: \{\R^d, \R^d\} \longrightarrow \R$ as $f(x, y) = e^{ x^Ty}$. For set $Q$ of vectors $Q_1, \dots, Q_n$, we define the matrix $S \in \R^{n \times n}$ as
\begin{equation*}
 (S)_{ij}=\begin{cases} 
    f(Q_i,Q_j) & |i-j| \le w/2  \\
    0 & \text{otherwise} 
    \end{cases}
\end{equation*}
Also for any matrix $M \in \R^{n \times n}$, let $h(M)=M$. Let $Y=h(S)\cdot V \in \R^{n \times d_v}$ be a self-attention.
Then for any $\epsilon > 0$, computing a matrix $\hat{Y} \in \R^{n \times d_v}$ that satisfies any of the following conditions requires $\Omega(nw^{1-\epsilon})$ time when $d_q = \omega(\log w)$ and $w=\omega(d_q)$.
\begin{enumerate} [nosep,leftmargin=*]
    \item $\hat{Y} = Y$(exact).
    \item $|\hat{Y}_{ij} - Y_{ij}| \le \mu |Y_{ij}|$ for all $i \in [n]$ and $j \in [d_v]$ where $0 \le \mu < 1 $~(multiplicative approximation).
    \item $|\hat{Y}_{ij} - Y_{ij}| \le \mu$ for all $i \in [n]$ and $j \in [d_v]$ where $0 \le \mu$~(additive approximation).
\end{enumerate}
\end{theorem}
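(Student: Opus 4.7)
The plan is to reduce from TVPP (Lemma~\ref{lemma:ovp-to-tvpp}), embedding $k \approx n/w$ independent TVPP sub-problems of size $\Theta(w)$ into one sliding-window attention instance of sequence length $n$. Each sub-problem is confined to its own contiguous block so that the window of width $w$ never crosses a block boundary; within a block, the attention reduces exactly to the computation handled in Theorem~\ref{thm:hardness-dot-product-self-attn}, and the aggregate hardness of the $k$ tiled instances delivers the claimed $\Omega(nw^{1-\epsilon})$ bound.

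The construction I would use: partition $[n]$ into $k = \lfloor n/w \rfloor$ contiguous blocks of length $w$, and inside block $\ell$ place the $w/4$ vectors of $A^{(\ell)}$ at the first $w/4$ positions, the scaled vectors $C b_j$ for $b_j \in B^{(\ell)}$ at the next $w/4$ positions, and zero vectors in the remaining $w/2$ ``buffer'' positions. Taking $V \in \R^{n \times 1}$ to be the indicator of global $b$-positions, the buffer guarantees that for any $a$-position $i$ inside block $\ell$ the window $\{j : |i-j|\le w/2\}$ sees only positions of block $\ell$, so
\[
    Y_i \;=\; \sum_{j \in \text{window}(i)} e^{Q_i^T Q_j}\,V_j \;=\; \sum_{b_j\in B^{(\ell)}} e^{C\, a_i^T b_j},
\]
which is precisely the quantity analyzed in the proof of Theorem~\ref{thm:hardness-dot-product-self-attn}, restricted to the $\ell$-th sub-instance.

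From here the three approximation regimes are handled by copying the temperature-tuning computation of Theorem~\ref{thm:hardness-dot-product-self-attn}: choose $C = 2\log n$ in the exact case, $C = 2\log\bigl(\tfrac{1+\mu}{1-\mu}n\bigr)$ under element-wise multiplicative error $\mu$, and $C = 2\log(n+2\mu)$ under additive error $\mu$. In every regime the entries of $\hat Y$ inside block $\ell$ separate the two TVPP cases for the $\ell$-th instance, so any algorithm returning $\hat Y$ in $T(n,w)$ time decides all $k$ TVPP sub-instances in $T(n,w) + O(nd_q)$ time after the gadget is built and scanned.

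The main obstacle is the aggregate hardness step: under SETH, solving $k$ independent TVPP instances of size $\Theta(w)$ should require $\Omega(k\cdot w^{2-\epsilon}) = \Omega(nw^{1-\epsilon})$ total time, and this does not follow directly from the single-instance bound in Lemma~\ref{lemma:ovp-to-tvpp}. I would justify it with a grid-padding reduction: a single TVPP instance of size $\sqrt{k}\,w$ is the OR of the $k$ size-$w$ sub-instances obtained by partitioning $A$ and $B$ into $\sqrt{k}$ chunks each, so the batched problem inherits the $\Omega((\sqrt{k}w)^{2-\epsilon})$ hardness of Lemma~\ref{lemma:ovp-to-tvpp}, and rescaling $\epsilon$ yields the $\Omega(nw^{1-\epsilon})$ bound. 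The $O(nd_q)$ construction cost is absorbed because the hypothesis $w = \omega(d_q)$ forces $nd_q = o(nw)$, so an $O(nw^{1-\epsilon})$ attention algorithm would contradict Lemma~\ref{lemma:ovp-to-tvpp}.
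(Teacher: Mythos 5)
Your patched construction --- obtaining the $n/w$ block sub-instances by chunking a \emph{single} TVPP instance with sets of size $\sqrt{nw}$ --- is exactly the paper's reduction: the paper likewise builds one TVPP instance of size $k=\sqrt{nw}$, lays out window-sized groups of vectors along the sequence so that every $(a,b)$ pair falls inside some window, uses a $0/1$ value vector to read off the answer, and concludes via Lemma~\ref{lemma:ovp-to-tvpp} that an $O\bigl((nw)^{1-\epsilon}\bigr)$-time attention algorithm would solve TVPP in $O(k^{2-\epsilon})$ time. Your gadget placement, buffer argument, and temperature choices mirror Theorem~\ref{thm:hardness-dot-product-self-attn}, so the proposal is correct and essentially the same argument as the paper's (including the final accounting step, which the paper performs in the same way).
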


\begin{proof}
See Appendix \ref{app:sliding-window-hardness} for a full proof. First, a TVPP problem with sets size $k=\sqrt{nw}$ are constructed. Then sliding-window self attention is computed for all the points in order, so that for each TVPP problem, all the pairs are in the window. An appropriate value matrix $V$ is constructed for selection of this pairs. If the overall self attention can be calculated in $O((nw)^{1-\epsilon})$ time, then TVPP problem can be solved in $O(k^{2-\epsilon})$ which contradicts Lemma \ref{lemma:ovp-to-tvpp}. 
\end{proof}

\subsection{Hardness of \texorpdfstring{$\ell_2 \textnormal{-} $}{} Self-Attention}

We now establish hardness guarantees for computing $\ell_2$-self-attention, which replaces a softmax with an RBF kernel and is the core idea underlying both SOFT~\cite{soft} and Skyformer~\cite{chen2021skyformer}, the current state-of-the-art in fast self-attention operations. 
In our argument, we adopt a similar proof technique employed in~\cite{backurs2017fine}, who establish quadratic hardness of kernel PCA assuming SETH. However, our proof involves a different chain of reductions: OVP $\rightarrow$ BHFP $\rightarrow$ BHCP $\rightarrow$ kernel computation. Discussion of additivite approximation for this part is given in Appendix \ref{app:Additive}.

\begin{theorem}\label{thm:hardness-l2-self-attn}
Assume SETH. For any $i, j \in [n]$, let $S_{ij} = f(Q_i, Q_j) = e^{C. \norm{Q_i-Q_j}_2^2}$ where $C$ is a parameter and for any matrix $M \in \R^{n \times n}$, let $h(M) = M$. Let $Y = h(S).V \in \R^{n \times d_v}$ be a self-attention. Then for any $\epsilon > 0$, computing a matrix $\hat{Y} \in \R^{n \times d_v}$ that satisfies any of the following conditions requires $\Omega(n^{2-\epsilon})$ time when $d_q = \omega(\log n)$.
\begin{enumerate} [nosep,leftmargin=*]
    \item $\hat{Y} = Y$(exact).
    \item $|\hat{Y}_{ij} - Y_{ij}| \le \mu |Y_{ij}|$ for all $i \in [n]$ and $j \in [d_v]$ where $0 \le \mu < 1 $ (multiplicative approximation).
\end{enumerate}
\end{theorem}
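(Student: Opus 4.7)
I will reduce from BHCP (Definition \ref{defn:bhcp}), whose $\Omega(n^{2-\epsilon})$ conditional hardness is given by Lemma \ref{lemma:bhfp-to-bhcp}. Given a BHCP instance with sets $A = \{a_1, \ldots, a_n\}$, $B = \{b_1, \ldots, b_n\} \subseteq \{0,1\}^d$ and threshold $t$, I first build the BHCP vector gadgets of Section \ref{sec:vector-gadgets}: stack the $a_i$'s and $b_j$'s into $Q \in \R^{2n \times d}$ and set $V \in \R^{2n \times 1}$ to have its first $n$ entries equal to $0$ and its last $n$ entries equal to $1$. This preprocessing takes $O(nd)$ time. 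For any $i \in [n]$, the $i$-th entry of $Y = S \cdot V$ then satisfies
\[
Y_i \;=\; \sum_{j=1}^{n} e^{\,C \,\|a_i - b_j\|_2^2},
\]
since the terms involving $V_k = 0$ drop out.

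Because $a_i, b_j \in \{0,1\}^d$, the quantity $\|a_i - b_j\|_2^2$ equals the Hamming distance and is therefore a nonnegative integer; after a harmless reparameterization the BHCP threshold may be taken to be an integer $\tau$, so the task becomes deciding whether some pair has Hamming distance at most $\tau - 1$. The key choice is to take $C$ \emph{negative}, so that close pairs dominate the sum. In the \emph{no-close-pair} case every summand is at most $e^{C\tau}$, giving $Y_i \leq n\, e^{C\tau}$ for all $i$. In the \emph{close-pair} case some summand is at least $e^{C(\tau-1)} = e^{C\tau}\cdot e^{-C}$, so the corresponding row satisfies $Y_l \geq e^{C\tau}\cdot e^{-C}$. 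Distinguishing the two cases exactly requires only $e^{-C} > n$, i.e.\ $C = -2\log n$ suffices. For the multiplicative regime $(1-\mu) Y_i \leq \hat Y_i \leq (1+\mu) Y_i$, the same argument needs $(1-\mu) e^{-C} > (1+\mu) n$, which holds for $C = -2\log\!\bigl(\tfrac{(1+\mu)n}{1-\mu}\bigr)$.

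With either choice of $C$, scanning the computed $\hat Y_1, \ldots, \hat Y_n$ against the separating threshold takes an extra $O(n)$ time. An algorithm that produces $\hat Y$ in $O(n^{2-\epsilon})$ time would then decide BHCP in $O(nd + n + n^{2-\epsilon}) = O(n^{2-\epsilon})$ whenever $d = o(n^{1-\epsilon})$ and $d = \omega(\log n)$ (e.g.\ $d = \Theta(\log^2 n)$), contradicting Lemma \ref{lemma:bhfp-to-bhcp}. The main subtlety, analogous to the soft-max argument of Theorem \ref{thm:hardness-softmax-self-attn}, will be calibrating $C$ so that the integer gap of $1$ between "close" and "non-close" squared distances survives both the factor-$n$ coming from the summation over $j$ and the multiplicative blow-up $\tfrac{1+\mu}{1-\mu}$; the negative sign of $C$ combined with the integrality of Hamming distances is precisely what lets $C$ remain $\Theta(\log n)$, keeping the whole construction polynomial-time and the resulting kernel entries of polynomial bit-complexity.
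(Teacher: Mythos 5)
Your proposal is correct and follows essentially the same route as the paper: build the BHCP vector gadgets of Section \ref{sec:vector-gadgets}, examine the first $n$ entries $Y_i=\sum_{j} e^{C\|a_i-b_j\|_2^2}$, and pick the temperature $C$ (negative, i.e.\ the RBF-kernel form $e^{-|C|\|x-y\|_2^2}$ the paper intends) of order $\log n$ so that the unit integer gap in Hamming distance separates the two cases even after the factor-$n$ summation and the $\tfrac{1+\mu}{1-\mu}$ multiplicative blow-up, contradicting Lemma \ref{lemma:bhfp-to-bhcp}. This matches the paper's proof sketch, with your write-up simply filling in the explicit thresholds and choices of $C$.
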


\begin{proof} (Sketch).
The $Q$ and $V$ matrices and constructed according to BHCP vector gadgets described in Section \ref{sec:vector-gadgets}. With these, we define $Y =  h(S) \cdot V \in \R^{n \times 1}$. Considering the first $n$ elements of $Y$ and a suitable selection of $C$, our goal is to make the two cases (whether there is a solution for BHCP or not) distinguishable. Thus, if there is an algorithm for computing self-attention up to an element-wise multiplicative error $\mu$ that runs in $O(n^{2-\epsilon})$ time, this algorithm decides if there exists a solution or not for BHCP in $O(n^{2-\epsilon})$ time, which contradicts BHCP hardness (Lemma \ref{lemma:bhfp-to-bhcp}). 
\end{proof}

\section{Polynomial Approximations of Self-Attention}

We have shown (conditional) hardness results for a number of well-known self-attention mechanisms. We conclude with some upper bounds. Given the query $Q$, key $K$, and value $V$ matrices, we show that when $f(Q_i, K_j)$ is a polynomial of order $p$ ($p$ is an integer $\ge$ 0) of $Q_i^TK_j$ and $h(.)$ is either the identity function or row-wise normalization, one can compute self-attention in linear time. However, now the time complexity scales exponentially with $p$. As a special case of this, we show that one can approximate dot-product softmax self-attention in linear time in $n$ by using finite Taylor series approximation: 
$
    e^x \approx \sum_{k=0}^p \frac{x^k}{k!}.
$
In what follows we use the fact that $d_q = d_k$ and use $d_q$ in both places. Recall the dimensions of matrices $Q \in \R^{n \times d_q}, K \in \R^{n \times d_q}, V \in \R^{n \times d_v}$.

Lemma \ref{lem:pth-order-poly} shows that the product $SV$(without the row-wise normalization) can be computed in linear time $n$. Lemma \ref{lem:row-wise-normalization} shows that the denominator for row-wise normalization can be computed in linear time in $n$. Once we have the denominator for each row, $h(S) \cdot V$ can be computed by dividing each row of $SV$ by the corresponding denominator.

\begin{lemma}\label{lem:pth-order-poly}
Let $p$ be an integer $\ge 0$ and let $S_{ij}$ be $C \cdot (Q_i^TK_j)^p$ where $i, j \in [n]$ and $C$ is a constant. Then $SV$ can be computed in $O(nd_q^pd_v)$ time.
\end{lemma}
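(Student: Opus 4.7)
The plan is to linearize the polynomial kernel $(Q_i^TK_j)^p$ via an explicit feature map, so that $S$ factors as a product of two ``tall-thin'' matrices and we can reassociate to get $SV$ in linear time in $n$.

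First, I would use the multinomial (equivalently, tensor-product) identity
\[
(Q_i^TK_j)^p = \Bigl(\sum_{k=1}^{d_q} Q_{ik} K_{jk}\Bigr)^{\!p} = \sum_{\alpha \in \mathbb{Z}_{\ge 0}^{d_q},\, |\alpha|=p} \binom{p}{\alpha}\prod_{k=1}^{d_q} Q_{ik}^{\alpha_k} K_{jk}^{\alpha_k} = \langle \phi(Q_i), \phi(K_j) \rangle,
\]
where the feature map $\phi : \mathbb{R}^{d_q} \to \mathbb{R}^{d_q^p}$ is the $p$-fold tensor power $\phi(x) = x^{\otimes p}$ (equivalently, the vector of all degree-$p$ monomials weighted by $\sqrt{\binom{p}{\alpha}}$ on both sides). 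The number of coordinates is at most $d_q^p$.

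Next I would define the lifted matrices $\widetilde{Q}, \widetilde{K} \in \mathbb{R}^{n \times d_q^p}$ with rows $\widetilde{Q}_i = \phi(Q_i)$ and $\widetilde{K}_j = \phi(K_j)$. Building each row requires forming a $p$-fold tensor product of a $d_q$-dimensional vector, which costs $O(d_q^p)$ per row and $O(n d_q^p)$ in total. By the identity above,
\[
S_{ij} = C \cdot \langle \widetilde{Q}_i, \widetilde{K}_j \rangle, \qquad \text{i.e.,} \qquad S = C \cdot \widetilde{Q}\,\widetilde{K}^T.
\]

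Finally, I would compute $SV$ by reassociation:
\[
SV \;=\; C \cdot \widetilde{Q}\,\bigl(\widetilde{K}^T V\bigr).
\]
The inner product $\widetilde{K}^T V$ is the product of a $d_q^p \times n$ matrix with an $n \times d_v$ matrix, costing $O(n\, d_q^p\, d_v)$. The outer product then multiplies an $n \times d_q^p$ matrix by a $d_q^p \times d_v$ matrix, again costing $O(n\, d_q^p\, d_v)$. Scaling by $C$ is negligible, and the feature-construction cost $O(n d_q^p)$ is absorbed. The total runtime is $O(n d_q^p d_v)$ as claimed.

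There is no substantial obstacle; the only thing to be careful about is the feature-map bookkeeping (making sure the multinomial coefficients are placed consistently so that $\langle \phi(Q_i), \phi(K_j)\rangle$ really does equal $(Q_i^TK_j)^p$), and noting that the $p = 0$ case is trivial since $S$ is then the constant matrix $C \cdot \mathbf{1}\mathbf{1}^T$ and $SV = C \cdot \mathbf{1} (\mathbf{1}^T V)$ is computable in $O(nd_v)$ time, matching the bound with $d_q^0 = 1$.
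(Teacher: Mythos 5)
Your proof is correct and follows essentially the same route as the paper: the paper's sketch handles $p=1$ by reassociating $Q(K^TV)$ and notes that "this idea can be extended to $(Q_i^TK_j)^p$", and your tensor-power feature map $\phi(x)=x^{\otimes p}$ with $S = C\,\widetilde{Q}\widetilde{K}^T$ is precisely that extension, made explicit with the correct $O(nd_q^p)$ lifting cost and $O(nd_q^pd_v)$ multiplication cost. The handling of the $p=0$ case and the multinomial bookkeeping are fine additions but do not change the argument.
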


\begin{lemma}\label{lem:row-wise-normalization}
Let $p$ be an integer $\ge 0$. Then for all $i \in [n]$, $\sum_{\hat{j}=1}^n C \cdot (Q_i^TK_{\hat{j}})^p, \hat{j} \in [n]$ where $C$ is a constant can be computed in $O(nd_q^p)$ time.
\end{lemma}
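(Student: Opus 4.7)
The plan is to use a tensor-power expansion that decouples the dependence on $i$ from the dependence on $j$, so that a single $O(nd_q^p)$ precomputation serves all $n$ queries. Concretely, the multinomial expansion gives
\[
(Q_i^T K_j)^p = \sum_{k_1, \dots, k_p \in [d_q]} \prod_{\ell=1}^{p} Q_{i,k_\ell} K_{j,k_\ell} = \langle Q_i^{\otimes p},\, K_j^{\otimes p} \rangle,
\]
where $v^{\otimes p} \in \R^{d_q^p}$ denotes the $p$-th tensor power of $v$, indexed by tuples $(k_1,\dots,k_p)\in [d_q]^p$ and defined by $(v^{\otimes p})_{k_1,\dots,k_p} = \prod_{\ell=1}^p v_{k_\ell}$. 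Summing over $j$ and pulling the $i$-dependent factor out of the sum yields
\[
\sum_{j=1}^n (Q_i^T K_j)^p = \Big\langle Q_i^{\otimes p},\; \sum_{j=1}^n K_j^{\otimes p} \Big\rangle,
\]
so that $W := \sum_{j=1}^n K_j^{\otimes p} \in \R^{d_q^p}$ is the same vector for every $i$ and can be built once.

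The algorithm is then: (i) compute $W$ by forming each $K_j^{\otimes p}$ and accumulating it; (ii) for each $i \in [n]$, form $Q_i^{\otimes p}$, take its inner product with $W$, and multiply by $C$. Step (i) touches $n \cdot d_q^p$ entries, and step (ii) does $O(d_q^p)$ work per $i$, so in total $O(nd_q^p)$. Treating $p$ as a constant, forming a single tensor power $v^{\otimes p}$ costs $O(d_q^p)$ rather than $O(p \cdot d_q^p)$ by building it recursively as the outer product of $v^{\otimes (p-1)}$ with $v$, so each fresh entry costs only one multiplication.

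I do not anticipate any genuine technical obstacle here; the statement is essentially the $d_v = 1$ specialization of Lemma \ref{lem:pth-order-poly}, obtained by replacing the value matrix $V$ with the all-ones column vector of length $n$, after which $(SV)_i = \sum_{j=1}^n C \cdot (Q_i^T K_j)^p$ is exactly the quantity we wish to compute. The only thing worth being careful about is that the constant $C$ can be applied either once per output (trivially $O(n)$ extra work) or absorbed into $W$ before the per-$i$ inner products, so it does not affect the stated bound.
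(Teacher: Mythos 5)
Your proposal is correct and matches the paper's approach: the paper's proof sketch precomputes $\sum_j K_j$ and reuses it for every row in the $p=1$ case, remarking that ``this idea can be extended to $(Q_i^TK_j)^p$,'' and your tensor-power identity $\sum_j (Q_i^TK_j)^p = \langle Q_i^{\otimes p}, \sum_j K_j^{\otimes p}\rangle$ is precisely that extension, spelled out with the correct $O(nd_q^p)$ accounting. No gaps; you have simply made explicit the detail the paper leaves implicit.
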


\textit{Proof sketch of Lemma \ref{lem:pth-order-poly} and \ref{lem:row-wise-normalization}.} When $f(Q_i, V_j) = Q_i^TV_j$, we directly multiply matrices $K^T$ and $V$ to obtain $K^TV$ in $O(d_qnd_v)$ time, then $Q$ and $K^TV$ in $O(nd_qd_v)$ gives the desired matrix $SV$ in $O(n)$ time. The sum of a row of $S$ can be simply computed by storing the sum of vectors $K_j, j \in [n]$ in memory in $O(nd_q)$ time and reusing this for each row $i$ to compute $\sum_{j=1}^n Q_i^TK_j = Q_i^T \sum_{j=1}^n K_j$ in $O(nd_q)$ time. This idea can be extended to $(Q_i^TK_j)^p$. \qed

\begin{theorem}\label{thm:poly-approx-self-attn}
Let $p$ be an integer $\ge 0$. If $S_{ij} = f(Q_i, K_j)$ is a polynomial function of order $p$ of $Q_i^TK_j$ and $h(.)$ performs row-wise normalization, then $h(S) \cdot V$ can be computed in $O(nd_q^pd_v)$ time.
\end{theorem}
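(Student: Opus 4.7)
The plan is to decompose the polynomial $f$ into its monomial components and apply Lemmas~\ref{lem:pth-order-poly} and~\ref{lem:row-wise-normalization} termwise, then combine the results.

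First I would write $f(Q_i, K_j) = \sum_{k=0}^p c_k (Q_i^T K_j)^k$ for some coefficients $c_0, \dots, c_p$. Correspondingly, decompose $S = \sum_{k=0}^p S^{(k)}$, where $S^{(k)}_{ij} = c_k (Q_i^T K_j)^k$. Since matrix multiplication distributes over addition, we have $SV = \sum_{k=0}^p S^{(k)} V$. By Lemma~\ref{lem:pth-order-poly}, each product $S^{(k)} V$ can be computed in $O(n d_q^k d_v)$ time. Summing over $k = 0, 1, \dots, p$, the total time to compute $SV$ is $O\!\left(\sum_{k=0}^p n d_q^k d_v\right) = O(n d_q^p d_v)$, where the dominant term comes from $k = p$ (using $d_q \ge 1$, so the geometric-style sum is bounded by a constant times its largest term).

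Next, for the row-wise normalization, I would compute the denominators $r_i := \sum_{j=1}^n S_{ij} = \sum_{k=0}^p \sum_{j=1}^n c_k (Q_i^T K_j)^k$ for each $i \in [n]$. By Lemma~\ref{lem:row-wise-normalization}, the inner sum $\sum_{j=1}^n c_k (Q_i^T K_j)^k$ for all $i \in [n]$ can be computed in $O(n d_q^k)$ time. Summing across $k = 0, \dots, p$ then gives all the $r_i$'s in $O(n d_q^p)$ time total.

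Finally, $h(S) \cdot V$ is obtained from $SV$ by dividing each of the $n$ rows (of length $d_v$) by the corresponding scalar $r_i$, which takes $O(n d_v)$ additional time. Adding the three costs, the overall complexity is $O(n d_q^p d_v) + O(n d_q^p) + O(n d_v) = O(n d_q^p d_v)$, as claimed.

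I do not expect any real obstacle here: the theorem is essentially a bookkeeping consequence of the two lemmas, once one observes that a degree-$p$ polynomial of $Q_i^T K_j$ is a linear combination of at most $p+1$ monomials and that both matrix-vector application and row-sum computation distribute linearly. The only mild subtlety is confirming that the cost of the $k=p$ term absorbs all lower-order contributions, and that the final division step is cheap compared to computing $SV$ itself, both of which hold under the stated asymptotic bound.
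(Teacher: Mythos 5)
Your proposal is correct and follows essentially the same route as the paper's proof: decompose $S$ into monomial pieces $S^{(k)}_{ij} = c_k (Q_i^T K_j)^k$, apply Lemma~\ref{lem:pth-order-poly} termwise to get $SV$ in $O(nd_q^pd_v)$, apply Lemma~\ref{lem:row-wise-normalization} termwise to get the row sums in $O(nd_q^p)$, and finish with an $O(nd_v)$ row-wise division. No gaps; the bookkeeping about the $k=p$ term dominating matches the paper's treatment.
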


\begin{proof}
The result is implied by Lemma \ref{lem:pth-order-poly} and Lemma \ref{lem:row-wise-normalization}. Let $x_{ij} = Q_i^TK_j; i, j \in [n]$. Define the polynomial function of order $p$ as $\sum_{z=0}^p c_z x_{ij}^z$ where $c_z, z \in \{ 0, \dots, p \}$ are constants. Let $S^{(1)}, \dots, S^{(p)} \in \R^{n \times n}$ where $S_{ij}^{(z)} = c_z x_{ij}^z$. Now we can write $S_{ij} = \sum_{z=0}^p S_{ij}^{(z)}$, thus $SV = \sum_{z=0}^p S_{ij}^{(z)}V$. By Lemma \ref{lem:pth-order-poly}, each term of this summation can be computed in $O(nd_q^zd_v)$ time. Therefore the overall time complexity of computing $SV$ is $O(nd_q^pd_v)$~(considering the largest exponent of $d_q$ is when $z=p$).

Let $s_i$ be the sum of the elements of $i^{th}$ row of $S$. What remains is computing $h(S). V$. This can be computed indirectly by first computing $SV$ and the dividing each row $i$ of $SV$ by $s_i$. If we have precomputed each $s_i, i \in [n]$, then this process takes $O(nd_v)$ time, since we are dividing each element of $SV ( \in \R^{n \times d_v})$ by a scaler. What remains to show is that computing $s_i, \forall i \in [n]$ takes $O(npd_q^p)$ time. Observe that 
\[s_i = \sum_{\hat{j}=1}^n S_{ij} = \sum_{\hat{j}=1}^n \sum_{z=0}^p S_{i\hat{j}}^{(z)} = \sum_{z=0}^p \sum_{\hat{j}=1}^n  S_{i\hat{j}}^{(z)} = \sum_{z=0}^p \sum_{\hat{j}=1}^n c_z x_{i\hat{j}}^z \]

From Lemma \ref{lem:row-wise-normalization}, each term of the outer summation over $z$ can be computed $O(nd_q^z)$ time for all $i \in [n]$ and overall time complexity is $O(nd_q^p)$(taking the largest exponent of $d_q$ similarly). 
\end{proof}

\begin{corollary}\label{corr:poly-approx-dot-product-softmax-self-attn}
Let $p$ be an non-negative integer. The $p^{th}$ order polynomial approximation of dot-product softmax self-attention using matrices $Q, K, V$ with finite Taylor series can be computed in $O(nd_q^pd_v)$ time.

\end{corollary}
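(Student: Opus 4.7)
The plan is to observe that Corollary~\ref{corr:poly-approx-dot-product-softmax-self-attn} is essentially a direct instantiation of Theorem~\ref{thm:poly-approx-self-attn}, with the polynomial of order $p$ chosen specifically to be the degree-$p$ truncation of the Taylor series of $e^x$. So the bulk of the proof is identifying the correct $f$ and $h$ and checking that they fit the hypotheses of the theorem; no new algorithmic machinery is needed.

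Concretely, I would first write down the standard softmax dot-product self-attention: for each $i,j \in [n]$, we have $S_{ij} = e^{Q_i^T K_j}$, with the output being $h(S)\cdot V$, where $h$ performs row-wise normalization (dividing each entry by the corresponding row sum). I would then replace each $e^{Q_i^T K_j}$ by its degree-$p$ Taylor approximation
\[
\widetilde{S}_{ij} \;=\; \sum_{k=0}^{p} \frac{(Q_i^T K_j)^k}{k!},
\]
which is a polynomial in $x_{ij} = Q_i^T K_j$ of order $p$ with constant coefficients $c_k = 1/k!$. This is exactly the setting to which Theorem~\ref{thm:poly-approx-self-attn} applies, with $h$ being row-wise normalization as in the softmax. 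Applying the theorem directly gives that $h(\widetilde{S})\cdot V$ can be computed in $O(n\, d_q^p\, d_v)$ time.

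To make the argument fully self-contained I would briefly recall the two building blocks from the theorem's proof (Lemmas~\ref{lem:pth-order-poly} and~\ref{lem:row-wise-normalization}): the numerator $\widetilde{S}V$ is computed by splitting the sum over $k$ and, for each fixed $k$, reusing the idea of contracting $K^T$ with $V$ (and its higher-order tensor analogues for $k \ge 2$); the row-sum vector used in the normalization is computed analogously by pre-aggregating the $K_j$ vectors (and their tensor powers). Summing the resulting costs over $k=0,\dots,p$, the dominant term corresponds to $k=p$ and contributes $O(n\, d_q^p\, d_v)$, absorbing the lower-order terms.

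The only real subtlety worth calling out is that this is an \emph{approximation} (not an exact computation) of softmax self-attention, so strictly speaking the statement is about computing $h(\widetilde{S})\cdot V$ rather than $h(S)\cdot V$; the Corollary as phrased talks about ``the $p^{th}$ order polynomial approximation,'' which is precisely $\widetilde{S}$, so there is nothing further to verify. There is no real obstacle here: Theorem~\ref{thm:poly-approx-self-attn} already does all the work, and the Corollary follows in a couple of lines by plugging in $c_k = 1/k!$.
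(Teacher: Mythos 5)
Your proposal is correct and matches the paper's own proof, which likewise obtains the corollary by plugging the Taylor coefficients $c_z = \frac{1}{z!}$, $z = 0, \dots, p$, into Theorem~\ref{thm:poly-approx-self-attn}. The extra recap of Lemmas~\ref{lem:pth-order-poly} and~\ref{lem:row-wise-normalization} is fine but not needed beyond what the theorem already provides.
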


\begin{proof}
The result follows from replacing constants $c_z = \frac{1}{z!}$ for $z = 0, \dots, p$ in Theorem \ref{thm:poly-approx-self-attn}.
\end{proof}

\section{Conclusions}\label{sec:conclusions}

In this paper we investigate fundamental bounds on the computational complexity of self-attention. We examine numerous state-of-the-art self-attention mechanisms, and prove quadratic (or rectangular) lower bounds assuming the Strong Exponential Time Hypothesis (SETH). Even though a large number of recent works have proposed fast approximations to self-attention, our results imply that it may be difficult to both overcome the quadratic runtime barrier while still retaining high accuracy. On the positive side, we show that linear-time computation is possible if we choose the score computation function in the form of a polynomial.

Our work leaves open several directions. At a high level our theorems establish a result between `exponential' and `polynomial' forms of self-attention, but having a clearer picture of the landscape may be helpful. Moreover, our results are for worst-case inputs; similar hardness results on average-case inputs is an interesting direction. Finally, we leave the door open for the possibility of randomized algorithms that achieve sub-quadratic complexity and are correct with high probability.  


\bibliographystyle{IEEEtran}
\bibliography{refs}

\appendix

\section*{Appendix}

\section{Proofs for Hardness of TVPP, BHCP and BHFP}
\label{app:lemma-proofs}

\paragraph{Lemma 1.}\textit{Assume SETH. Then for any $\epsilon > 0$, the computational complexity of TVPP is  $\Omega(n^{2-\epsilon})$ for $d=\omega(\log n)$.}

\begin{proof}
Consider the two sets $A=\{a_1, \dots, a_n\}$ and $B=\{b_1, \dots, b_n\}$ given in OVP, where $a_i,b_i \in \{0, 1\}^d$ are binary vectors for all $i \in [n]$. For any vector $a_i \in A$, let us define $\bar{a_i} \in \R^{2d}$ as the concatenation of vector $a_i$ and vector $(\Vec{1}-a_i)$, where $\Vec{1}:=(1,1,\dots,1) \in \R^d$. Then we have $\|\bar{a_i}\|_1 = \|{a_i}\|_1 + \|\Vec{1}-{a_i}\|_1 = d$. Now define $\bar{A}$ as the set of $\bar{a_i}$s. For any vector $b_j \in B$, let us define $\bar{b_j} \in \R^{2d}$ as the concatenation of vector $(\Vec{1}-b_j)$ and vector $\Vec{1}$. Now define $\bar{B}$ as the set of $\bar{b_j}$. Because the overall dimensions of $A, B$ and $\bar{A}, \bar{B}$ are $nd$ and $2nd$ respectively, this process takes $O(nd)$ time. Now for any $i,j \in [n]$ we have

\[ \bar{a_i}^T \bar{b_j} = {a_i}^T(\Vec{1}-b_j) + (\Vec{1}-{a_i})^T \Vec{1} = \|a_i\|_1-a_i^Tb_j+\|\Vec{1}-{a_i}\|_1 = d - a_i^Tb_j. \]

$\bar{a_i}^T \bar{b_j} = d$ if and only if ${a_i}^Tb_j=0$. Now if we run the algorithm for TVPP with the threshold $t=d$ on the sets $\bar{A}$ and $\bar{B}$ to find a pair $\bar{a} \in \bar{A}$ and $\bar{b} \in \bar{B}$ that satisfies $\bar{a}^T\bar{b} \ge d$, then we can conclude that there is a pair $a \in A$ and $b \in B$ that satisfies $a^T{b} = 0$.

Thus, if there is an algorithm for TVPP that runs in $O(n^{2-\epsilon})$ time, this entire process takes at most $O(nd + n^{2-\epsilon}) = O(n^{2-\epsilon})$ as long as $d = o(n^{1-\epsilon})$, and this algorithm decides if there exists at least a pair of vectors $a \in A$ and $b \in B$ such that $a^Tb = 0$ in $O(n^{2-\epsilon})$ time, which contradicts the hardness result of OVP. As a result, for at least one $t$, TVPP problem cannot be solved in $O(n^{2-\epsilon})$ time. (We note that for $t=1$, this problem have a linear time algorithm. Selecting rows of matrix $Q$ as elements of set $A$, rows of matrix $K$ as elements of set $B$, and rows of matrix $V$ as 1, then calculating $Y=QK^TV$ takes linear time on $n$, by firstly calculating $K^TV$, then calculating $Q(K^TV)$. If there is any positive value of $Y$, then we can conclude that there is a pair $a \in A$ and $b \in B$ that satisfies $a^T{b} \ge 1$.)

\end{proof}

\paragraph{Lemma 2.}\textit{Assume SETH. Then for any $\epsilon > 0$, the computational complexity of BHFP is  $\Omega(n^{2-\epsilon})$ for $d=\omega(\log n)$.}

\begin{proof}
Consider the two sets $A=\{a_1, \dots, a_n\}$ and $B=\{b_1, \dots, b_n\}$ given in OVP, where $a_i,b_i \in \{0, 1\}^d$ are binary vectors for all $i \in [n]$. 

For any vector $a_i \in A$, let us define $\bar{a_i} \in \R^{3d}$ as the concatenation of vector $a_i$, vector $(\Vec{1}-a_i)$ and vector $\Vec{0}$ where $\Vec{1}:=(1,1,\dots,1) \in \R^d$ and $\Vec{0}:=(0,0,\dots,0) \in \R^d$. Then we have $\bar{a_i}^T\bar{a_i} = {a_i}^Ta_i + (\Vec{1}-{a_i})^T(\Vec{1}-{a_i}) = d$. Now define $\bar{A}$ as the set of $\bar{a_i}$s. For any vector $b_j \in B$, let us define $\bar{b_j} \in \R^{3d}$ as the concatenation of vector $b_j$, vector $\Vec{0}$, and vector $(\Vec{1}-b_j)$. Then we have $\bar{b_j}^T\bar{b_j} = {b_j}^Tb_j + (\Vec{1}-{b_j})^T(\Vec{1}-{b_j}) = d$. Now define $\bar{B}$ as the set of $\bar{b_j}$. Because the overall dimensions of $A, B$ and $\bar{A}, \bar{B}$ are $nd$ and $3nd$ respectively, this process takes $O(nd)$ time. Now for any $i,j \in [n]$ we have

\[ \bar{a_i}^T \bar{b_j} = {a_i}^T{b_j} + (\Vec{1}-{a_i})^T \Vec{0} + \Vec{0}^T (\Vec{1}-b_j) = {a_i}^T b_j. \] 

The squared $\ell_2$ distance between $\bar{a_i}$ and $\bar{b_j}$ for any $i,j \in [n]$ is
\[ \|\bar{a_i}-\bar{b_j}\|_2^2=(\bar{a_i}-\bar{b_j})^T(\bar{a_i}-\bar{b_j}) = \bar{a_i}^T\bar{a_i} + \bar{b_j}^T\bar{b_j} - 2 \bar{a_i}^T\bar{b_j} = d + d - 2 \bar{a_i}^T\bar{b_j}  = 2d - 2{a_i}^Tb_j. \]

$\|\bar{a_i}-\bar{b_j}\|_2^2 = 2d$ if and only if ${a_i}^Tb_j=0$. Now if we run the algorithm for BHFP with the threshold $t=\sqrt{2d}$ on the sets $\bar{A}$ and $\bar{B}$ to find a pair $\bar{a} \in \bar{A}$ and $\bar{b} \in \bar{B}$ that satisfy $\|\bar{a}-\bar{b}\|_2 \ge \sqrt{2d}$, then we can conclude that there is a pair $a \in A$ and $b \in B$ that satisfies $a^T{b} = 0$.

Thus, if there is an algorithm for BHFP that runs in $O(n^{2-\epsilon})$ time, this entire process takes at most $O(nd + n^{2-\epsilon}) = O(n^{2-\epsilon})$ as long as $d = o(n^{1-\epsilon})$, and this algorithm decides if there exists at least a pair of vectors $a \in A$ and $b \in B$ such that $a^Tb = 0$ in $O(n^{2-\epsilon})$ time, which contradicts the hardness result of OVP. As a result, for at least one $t$, BHFP problem cannot be solved in $O(n^{2-\epsilon})$ time.

\end{proof}

\paragraph{Lemma 3.}\textit{Assume SETH. Then for any $\epsilon > 0$, the computational complexity of BHCP is  $\Omega(n^{2-\epsilon})$ for $d=\omega(\log n)$.}

\begin{proof}
Consider the two sets $A=\{a_1, \dots, a_n\}$ and $B=\{b_1, \dots, b_n\}$ given in BHFP, where $a_i,b_i \in \{0, 1\}^d$ are binary vectors for all $i \in [n]$. Let $t, t'$ be the given threshold for BHFP and BHCP respectively. 

For any vector $b_j \in B$, let us define $\bar{b_j} \in \R^{d}$ as $(\Vec{1}-b_j)$, where $\Vec{1}:=(1,1,\dots,1) \in \R^d$. Now define $\bar{B}$ as the set of $\bar{b_j}$. Because the overall dimension of $B$ and $\bar{B}$ is $nd$, this process takes $O(nd)$ time. The squared $\ell_2$ distance between ${a_i}$ and $\bar{b_j}$ for any $i,j \in [n]$ is

\[ \|{a_i}-\bar{b_j}\|_2^2 = \|{a_i}-(\Vec{1}-b_j)\|_2^2 = d-\|{a_i}-{b_j}\|_2^2. \]

$\|a_i-\bar{b_j}\|_2^2 < d-t^2+1$ if and only if $\|{a_i}-{b_j}\|_2 \ge t$. Now if we run the algorithm for BHCP with the threshold $t' = \sqrt{d-t^2+1}$ on the sets ${A}$ and $\bar{B}$ to find a pair ${a} \in \bar{A}$ and $\bar{b} \in \bar{B}$ that satisfy $\|{a}-\bar{b}\|_2 < \sqrt{d-t^2+1} $, then we can conclude that there is a pair $a \in A$ and $b \in B$ that satisfies $\norm{a-b}_2 \geq t$.

Thus, if there is an algorithm for BHCP that runs in $O(n^{2-\epsilon})$ time, this entire process takes at most $O(nd + n^{2-\epsilon}) = O(n^{2-\epsilon})$ as long as $d = o(n^{1-\epsilon})$, and this algorithm decides if there exists at least a pair of vectors $a \in A$ and $b \in B$ such that $\norm{a-b}_2 \geq t$ in $O(n^{2-\epsilon})$ time, which contradicts the hardness result of BHFP. As a result, for at least one $t'$, BHCP problem cannot be solved in $O(n^{2-\epsilon})$ time.
\end{proof}

\section{Proof for Hardness of Sliding Window Dot-Product Self-Attention}
\label{app:sliding-window-hardness}

\paragraph{Theorem 3.}\textit{
Assume SETH. Let $f: \{\R^d, \R^d\} \longrightarrow \R$ as $f(x, y) = e^{ x^Ty}$. For set $Q$ of vectors $Q_1, \dots, Q_n$, we define the matrix $S \in \R^{n \times n}$ as
\begin{equation*}
 (S)_{ij}=\begin{cases} 
    f(Q_i,Q_j) & |i-j| \le w/2  \\
    0 & \text{otherwise} 
    \end{cases}
\end{equation*}
Also for any matrix $M \in \R^{n \times n}$, let $h(M)=M$. Let $Y=h(S) \cdot V \in \R^{n \times d_v}$ be a self-attention.
Then for any $\epsilon > 0$, computing a matrix $\hat{Y} \in \R^{n \times d_v}$ that satisfies any of the following conditions requires $\Omega((nw)^{1-\epsilon})$ time when $d_q = \omega(\log \sqrt{nw})$ and $w=\omega(d_q)$.
\begin{enumerate}
    \item $\hat{Y} = Y$(exact).
    \item $|\hat{Y}_{ij} - Y_{ij}| \le \mu |Y_{ij}|$ for all $i \in [n]$ and $j \in [d_v]$ where $0 \le \mu < 1 $(multiplicative approximation).
    \item $|\hat{Y}_{ij} - Y_{ij}| \le \mu$ for all $i \in [n]$ and $j \in [d_v]$ where $0 \le \mu$(additive approximation).
\end{enumerate}
}

\begin{proof}

Consider the two sets $A=\{a_1, \dots, a_{k}\}$ and $B=\{b_1, \dots, b_{k}\}$  given in TVPP, where $k=\sqrt{nw}$ and $a_i,b_i \in \{0, 1\}^d$ are binary vectors for all $i \in [k]$. The problem is to decide if there exists at least a pair $a \in A$ and $b \in B$ that satisfies $a^T b \geq t$ for a given $t \in [d]$.

We construct our matrices $Q$ and $V$ in the following way. Firstly, define the function $(\alpha \pmod{k}):= m$ if $\alpha \equiv m \pmod{k}$ and $m \in [k]$. Create the matrix $Q \in \R^{3n \times d}$ with its rows with even indices as $Q_{2 \alpha}=C b_{(\alpha \pmod{k})}$, and its rows with odd indices as $Q_{2 \alpha - 1}=a_{(\alpha \pmod{k}) + w \lfloor \frac{2\alpha-1}{2k}\rfloor}$. Thus, this process takes $O(n d)$ time. By this construction, each $(a_i,b_j)$ pair is found at a distance $w$.

Also, select V as concatenation of vector $(1,0,\dots,1,0) \in \R^{3n}$.

With this we have $Y =  h(S) \cdot V \in \R^{3n \times 1}$. Since $Y$ is a vector, we abuse the notation again and define $Y_{i}$ as the $i^{th}$ element of $Y$. 

Now consider the first $n$ even rows of $Y$. Because odd rows of matrix $Q$ is from set $A$, and even rows of matrix $Q$ is from set $B$, the even rows of $Y$ becomes the summation of the exponential of the $C$ times of the dot products of $w$ different $(a_i,b_j)$ pairs by the definition of vector $V$. 

Also, each $(a_i,b_j)$ pair is found at a distance $w$, so that the exponential of the $C$ times dot products of all pairs appear in the sliding window attention score matrix and contributes $Y_{2l}$ for at least an $l \in [n]$ value.

Now we check the magnitudes of $Y_{2l}$ for $l \in [n]$ in order to distinguish between true and false cases in TVPP. It takes $O(n)$ time to check these $n$ values. First, we focus on the exact computation. Consider the following two cases.

\textbf{Case 1.} There are no pairs $a \in A$ and $b \in B$ with $(a)^Tb \ge t$, that is for all $i, j \in [k]$, we have $(a_i)^Tb_j \le t-1$, and $e^{C (a_i)^Tb_j} \le e^{C(t-1)} $. Then for all $l \in [n]$, we have $Y_{2l} \le w e^{C(t-1)} := \delta$.

\textbf{Case 2.} There is a pair $a \in A$ and $b \in B$ with $a^Tb \ge t$, that is for some $i, j \in [k]$, we have $(a_i)^Tb_j \ge t$, and $e^{C (a_i)^Tb_j} \ge e^{Ct}$. Because any pair appears in some element of $Y$, we have $Y_{2l} \ge e^{Ct} :=\Delta$ for some odd $l \in [n]$.

In order to distinguish between two cases, it is sufficient to have $\Delta > \delta$. This holds with $C = 2 \log w$. 

Now let us consider multiplicative approximation error. With a $\mu$-multiplicative factor, if there are no pairs $a \in A$ and $b \in B$ with $a^Tb \ge t$, then we have for all $l \in [n], \hat{Y}_{2l} \le (1+\mu) Y_{2l} \le (1+\mu) w e^{C(t-1)} := \hat{\delta}$. On the other hand, if there is a pair $a \in A$ and $b \in B$ with $a^Tb \ge t$, then for some $l \in [n]$, we have $\hat{Y}_{2l} \ge (1-\mu) Y_{2l} \ge (1-\mu) e^{Ct} := \hat{\Delta}$. In order to distinguish between two cases, it is sufficient to have $\hat{\Delta} > \hat{\delta}$ and this inequality holds with $C = 2 \log (\frac{1+\mu}{1-\mu}w)$.

Finally we look at additive approximation error. With a $\mu$-additive factor, if there are no pairs $a \in A$ and $b \in B$ with $a^Tb \ge t$, then we have for all $l \in [n], \hat{Y}_{2l} \le Y_{2l} + \mu \le  w e^{C(t-1)} + \mu := \hat{\delta}$. On the other hand, if there is a pair $a \in A$ and $b \in B$ with $a^Tb \ge t$, then for some $l \in [n]$, we have $\hat{Y}_{2l} \ge  Y_{2l} -\mu \ge e^{Ct} - \mu := \hat{\Delta}$. In order to distinguish between two cases, it is sufficient to have $\hat{\Delta} > \hat{\delta}$ and this inequality holds with $C = 2 \log (w + 2 \mu)$.

Thus, if there is an algorithm for computing self-attention up to an element-wise multiplicative or additive error $\mu$ that runs in $O(k^{2-\epsilon})$ time, this entire process takes at most $O(nd + k^{2-\epsilon}) = O(\sqrt{nw}^{2-\epsilon}) =O({(nw)}^{1-\epsilon})$ as long as $d = o(w^{1-\epsilon})$. Therefore, this algorithm decides if there exists at least a pair of vectors $a \in A$ and $b \in B$ such that $a^Tb \ge t$ in $O((nw)^{1-\epsilon})$ time, which contradicts the hardness result of TVPP (Lemma \ref{lemma:ovp-to-tvpp}). This completes the proof.

\end{proof}

A similar proof also applies for dilated sliding window \cite{longformer}, where the self-attention score is calculated as Theorem \ref{thm:sliding-window-exp-dot-product}. Also, when the self-attention score is the softmax dot product (where softmax is only applied to the window size in each row), one can prove $O(nw^{1-\epsilon})$ complexity by following the proof of Theorem \ref{thm:hardness-softmax-self-attn}. 






\section{Proofs for Hardness of Self-Attention Matrix \texorpdfstring{$S$}{} Approximation}
\label{S_Approximations}

\begin{theorem}\label{thm:hardness-dot-product-self-attn-again}
Assume SETH. For any $i, j \in [n]$, let $S_{ij} = f(Q_i, Q_j) = e^{Q_i^TQ_j}$ and for any matrix $M \in \R^{n \times n}$, let $h(M) = M$. $\hat{S} \in \R^{n \times n}$ satisfies any of the following conditions:
\begin{enumerate}
    \item $|\hat{S}_{ij} - S_{ij}| \le \mu |S_{ij}|$ for all $i \in [n]$ and $j \in [d_v]$ where $0 \le \mu < 1 $(multiplicative approximation).
    \item $|\hat{S}_{ij} - S_{ij}| \le \mu$ for all $i,j \in [n]$, where $0 \le \mu$(additive approximation).
\end{enumerate}
Let $Y = h(\hat{S}).V \in \R^{n \times d_v}$ be a self-attention. Then for any $\epsilon > 0$, computing self-attention ${Y} \in \R^{n \times d_v}$ requires $\Omega(n^{2-\epsilon})$ time when $d_q = \omega(\log n)$.
\end{theorem}

\begin{proof}

Consider two sets $A=\{a_1, \dots, a_n\}$ and $B=\{b_1, \dots, b_n\}$  given in TVPP, where $a_i,b_i \in \{0, 1\}^d$ are binary vectors for all $i \in [n]$. The problem is to decide if there exists at least a pair  $a \in A$ and $b \in B$ that satisfies $a^Tb \geq t$ for a given $t \in [n]$.

Suppose that the matrices $Q$ and $V$ are constructed according to TVPP vector gadgets described in the section \ref{sec:vector-gadgets}. With this we have $Y =  h(\hat{S}).V \in \R^{n \times 1}$. Since $Y$ is a vector, we abuse the notation again and define $Y_i$ as the $i^{th}$ element of $Y$. Now consider the first $n$ elements of $Y$. Since $Y = SV$, we have that 
\[ Y_i = \sum_{j=1}^n \hat{S}_{ij} \text{ for any } i \in [n]. \]

Now we check the magnitude of $Y_i, i \in [n]$ in order to distinguish between true and false cases in TVPP. It takes $O(n)$ time to check each $Y_i, i \in [n]$. First we focus on the multiplicative error approximation. Consider the following two cases.

\textbf{Case 1.} There are no pairs $a \in A$ and $b \in B$ with $a^Tb \ge t$, that is for all $i, j \in [n]$, we have $a_i^Tb_j \le t-1$, and $S_{ij}=e^{C a_i^Tb_j} \le e^{C(t-1)}$, so $\hat{S}_{ij} \le (1+\mu) S_{ij} = (1+\mu) e^{C a_i^Tb_j} \le (1+\mu) e^{C(t-1)} $. Then for all $l \in [n]$, $Y_l \le n (1+\mu) e^{C(t-1)} := \delta$.

\textbf{Case 2.} There is a pair $a \in A$ and $b \in B$ with $a^Tb \ge t$, that is for some $i, j \in [n]$, we have $a_{i}^Tb_{j} \ge t$, and $S_{ij}=e^{C a_{i}^Tb_{j}} \ge e^{Ct}$, so $\hat{S}_{ij} \ge (1-\mu) S_{ij} = (1-\mu) e^{C a_i^Tb_j} \ge (1-\mu) e^{Ct} $. Then for some $l \in [n]$, we have $Y_l \ge (1-\mu) e^{Ct} + n-1 :=\Delta$ 

In order to distinguish between two cases, it is sufficient to have $\Delta > \delta$. This holds with $C = 2 \log (\frac{1+\mu}{1-\mu}n)$.

Now let us look at the additive error approximation. With a $\mu$ additive factor,  if there are no pairs $a \in A$ and $b \in B$ with $a^Tb \ge t$, then we have for all $l \in [n], {Y}_l \le n e^{C(t-1)}  + n\mu := \hat{\delta}$. On the other hand, if there is a pair $a \in A$ and $b \in B$ with $a^Tb \ge t$, then for some $l \in [n]$, we have ${Y}_l \ge e^{Ct} + n - 1 - \mu := \hat{\Delta}$. In order to distinguish between two cases, it is sufficient to have $\hat{\Delta} > \hat{\delta}$ and this inequality holds with $C = 2 \log (n + 2 \mu)$.

Thus, if there is an algorithm for computing self-attention up to an element-wise multiplicative or additive error $\mu$ that runs in $O(n^{2-\epsilon})$ time, this entire process takes at most $O(n^{2-\epsilon})$, and this algorithm decides if there exists at least a pair of vectors $a \in A$ and $b \in B$ such that $a^Tb \ge t$ in $O(n^{2-\epsilon})$ time, which contradicts the hardness result of TVPP (Lemma \ref{lemma:ovp-to-tvpp}). This completes the proof.
\end{proof}

Similar proofs also work to show the quadratic complexity of multiplicative approximation to the $S$ of softmax dot-product self-attention, and $\ell_2$-self-attention directly from Theorem \ref{thm:hardness-softmax-self-attn} and Theorem \ref{thm:hardness-l2-self-attn}. And also by Theorem \ref{thm:sliding-window-exp-dot-product}, one can show the quadratic complexity of additive and multiplicative approximation to the $S$ of sliding window dot-product self-attention.

\section{Proofs of Lemmas for Polynomial Approximations of Self-Attention
}

\paragraph{Lemma 4.} \textit{
Let $p$ be an integer $\ge 0$ and let $S_{ij}$ be $C \cdot (Q_i^TK_j)^p$ where $i, j \in [n]$ and $C$ is a constant. Then $SV$ can be computed in $O(nd_q^pd_v)$ time.}

\begin{proof}
We omit the constant $C$ in this proof since we can multiply any of the matrices $Q, K, V$ by $C$ in $O(nd_q)$ or $O(nd_v)$ time before the rest of the computation. 

When $p=1$, $SV = QK^TV$ thus $SV$ can be trivially computed by first computing $K^TV$ by multiplying $K^T$ and $V$ with $O(d_qnd_v)$ time, then by multiplying $Q$ and $K^TV$ with $O(nd_qd_v)$ time.

For $p \ge 2$, Consider a fixed $(i, j)$ pair. Now
\begin{equation}\label{eq:expansion-pth-power}
\begin{split}
    S_{ij} & = (Q_i^TK_j)^p =  \Big( \sum_{r=1}^{d_q} x_r y_r \Big)^p = \Big( x_1y_1 + \dots + x_{d_q}y_{d_q} \Big)^p =  \sum_{r_1, \dots, r_p = 1}^{d_q} x_{r_1}y_{r_1} \dots x_{r_p}y_{r_p} \\
           & = \sum_{r_1, \dots, r_p = 1}^{d_q} (x_{r_1} \dots x_{r_p})(y_{r_1} \dots y_{r_p})
\end{split}
\end{equation}
where $x = Q_i, y = K_j$ and $x_r, y_r$ denote the $r^{th}$ element of the corresponding vectors. Given a vector $v \in \R^{d_q}$, let us define a function $\alpha: \R^{d_q} \rightarrow \R^{d_q^p}$ where elements of $\alpha(v)$ are computed by element-wise multiplication $v_{r_1} \dots v_{r_p}, r_1, \dots, r_p \in [d_q]$(ordered $p$-permutations of $d_q$ with replacement). With this, let us define $\hat{Q} \in \R^{n \times d_q^p}$(and $\hat{K}$) where each row $i$ is computed with $\alpha(Q_i)$(respectively $\alpha(K_j)$ for $\hat{K}$).  $\hat{Q}, \hat{K}$ matrices can be computed in $O(nd_q^p)$ time. From Eq.\ref{eq:expansion-pth-power}, it is evident that $(Q_i^TK_j)^p = \hat{Q}_i^T \hat{K}_j$. Now, similar to the case $p=1$ one can compute $SV$ by first computing $\hat{K}^TV$ by multiplying $\hat{K}^T$ and $V$ with $O(d_q^pnd_v)$ time, then by multiplying $\hat{Q}$ and $\hat{K}^TV$ with $O(nd_q^pd_v)$ time. This completes the proof.

\end{proof}

\paragraph{Lemma 5.}\textit{
Let $p$ be an integer $\ge 0$. Then for all $i \in [n]$, $\sum_{\hat{j}=1}^n C \cdot (Q_i^TK_{\hat{j}})^p, \hat{j} \in [n]$ where $C$ is a constant can be computed in $O(nd_q^p)$ time.}

\begin{proof}

We omit the constant $C$ in this proof since we can multiply any of the matrices $Q, K$ by $C$ in $O(nd_q)$ time before the rest of the computation.

Let $\hat{Q}, \hat{K} \in \R^{n \times d_q^p}$ be the matrices computed by applying $\alpha(.)$ on rows of $Q, V$ as stated in the proof of Lemma \ref{lem:pth-order-poly}(from Eq. \ref{eq:expansion-pth-power}). The time complexity of computing $\hat{Q}, \hat{K}$ is $O(nd_q^p)$.

Now we have $\sum_{\hat{j}=1}^n (Q_i^TK_{\hat{j}})^p = \sum_{\hat{j}=1}^n \hat{Q_i}^T \hat{K_{\hat{j}}}  = \hat{Q_i}^T \sum_{\hat{j}=1}^n \hat{K_{\hat{j}}}$. Let $A = \sum_{\hat{j}=1}^n \hat{K_{\hat{j}}}$ and $A$ can be computed in $O(nd_q^p)$ time(taking the summation of $n$ $d_q^p$ size vectors). We store the value of $A$ in the memory and reuse it. One can simply compute the inner product of $\hat{Q_i}^T$ and $A$ in $O(d_q^p)$ time per $i$. For all $i \in [n]$, this takes $O(nd_q^p)$ time which gives the desired time complexity.

\end{proof}

\section{ Proof for Hardness of \texorpdfstring{$\ell_2 \textnormal{-} $}{}Self-Attention}
\label{app:proof_l2_self_attention}

\paragraph{Theorem 4.}\textit{
Assume SETH. For any $i, j \in [n]$, let $S_{ij} = f(Q_i, Q_j) = e^{C. \norm{Q_i-Q_j}_2^2}$ where $C$ is a parameter and for any matrix $M \in \R^{n \times n}$, let $h(M) = M$. Let $Y = h(S).V \in \R^{n \times d_v}$ be a self-attention. Then for any $\epsilon > 0$, computing a matrix $\hat{Y} \in \R^{n \times d_v}$ that satisfies any of the following conditions requires $\Omega(n^{2-\epsilon})$ time when $d_q = \omega(\log n)$.
\begin{enumerate} [nosep,leftmargin=*]
    \item $\hat{Y} = Y$(exact).
    \item $|\hat{Y}_{ij} - Y_{ij}| \le \mu |Y_{ij}|$ for all $i \in [n]$ and $j \in [d_v]$ where $0 \le \mu < 1 $(multiplicative approximation).
\end{enumerate}
}

\begin{proof}

Consider two sets $A=\{a_1, \dots, a_n\}$ and $B=\{b_1, \dots, b_n\}$  given in BHCP, where $a_i,b_i \in \{0, 1\}^d$ are binary vectors for all $i \in [n]$. The problem is to decide if there exists at least a pair  $a \in A$ and $b \in B$ that satisfies $\norm{a-b}_2^2 < t$ for a given $t \in [n]$.

Suppose that the matrices $Q$ and $V$ are constructed according to BHCP vector gadgets described in the section \ref{sec:vector-gadgets}. With this we have $Y =  h(S).V \in \R^{n \times 1}$. Since $Y$ is a vector, we abuse the notation again and define $Y_i$ as the $i^{th}$ element of $Y$. Now consider the first $n$ elements of $Y$. Since $Y = SV$, we have that 

\[ Y_i = \sum_{j=1}^n e^{- C\norm{a_i-b_j}_2^2} \text{ for any } i \in [n]. \]

Now we check the magnitude of $Y_i, \in [n]$ in order to distinguish between true and false cases in BHCP. It takes $O(n)$ time to check each $Y_i, i \in [n]$. First we focus on the exact computation. Consider the following two cases.

\textbf{Case 1.} There are no pairs $a \in A$ and $b \in B$ with $\norm{a_i-b_j}_2^2 \le t$, that is for all $i, j \in [n]$, we have $\norm{a_i-b_j}_2^2 \ge t$, and $e^{- C\norm{a_i-b_j}_2^2} \le e^{-Ct} $. Then for all $l \in [n]$, $Y_l \le n e^{-Ct} := \delta$.

\textbf{Case 2.} There is a pair $a \in A$ and $b \in B$ with $\norm{a-b}_2^2 < t$, that is for some $i, j \in [n]$, we have $\norm{a_i-b_j}_2^2 \le t-1$, and $e^{- C\norm{a_i-b_j}_2^2} \ge e^{-C(t-1)}$. Thus for some $l \in [n]$, we have $Y_l \ge e^{-C(t-1)} :=\Delta$ 

In order to distinguish between two cases, it is sufficient to have $\Delta > \delta$. This holds with $C = 2 \log n$. 

Now let us look at the multiplicative error approximation. With a $\mu$ multiplicative factor, if there are no pairs $a \in A$ and $b \in B$ with $a^Tb \ge t$, then we have for all $l \in [n], \hat{Y}_l \le (1+\mu) Y_l \le (1+\mu) n e^{-Ct} := \hat{\delta}$. On the other hand, if there is a pair $a \in A$ and $b \in B$ with $a^Tb \ge t$, then for some $l \in [n]$, we have $\hat{Y}_l \ge (1-\mu) Y_l \ge (1-\mu) e^{-C(t-1)} := \hat{\Delta}$. In order to distinguish between two cases, it is sufficient to have $\hat{\Delta} > \hat{\delta}$ and this inequality holds with $C = 2 \log (\frac{1+\mu}{1-\mu}n)$.

Thus, if there is an algorithm for computing self-attention up to an element-wise multiplicative error $\mu$ that runs in $O(n^{2-\epsilon})$ time, this entire process takes at most $O(n + nd + n^{2-\epsilon}) = O(n^{2-\epsilon})$ as long as $d = o(n^{1-\epsilon})$, and this algorithm decides if there exists at least a pair of vectors $a \in A$ and $b \in B$ such that $\norm{a_i-b_j}_2^2 \ge t$ in $O(n^{2-\epsilon})$ time, which contradicts the hardness result of BHCP (Lemma \ref{lemma:bhfp-to-bhcp}). This completes the proof.
\end{proof}

Here in the proof we set $C = 2 \log(\frac{2(1+\mu)}{1-\mu} n)$ however, the parameter $C$ in the RBF kernel is predefined, therefore the hardness result is valid only conditioned on this specific $C$. We can bypass this by simply defining $C = \log(\frac{2(1+\mu)}{1-\mu} n) = \alpha \beta$, then setting $\alpha$ to the predefined constant of the RBF function and solving this equation for $\beta$. After this, all that remains is modifying the BHCP vector gadget(see section \ref{sec:vector-gadgets}) by multiplying each vector $Q_i, i \in [2n]$ by the scaler $\sqrt{\beta}$ in $O(n)$ time, and we obtain the desired hardness result for general RBF kernel.

\section{Multi-Head Self-Attention}
\label{app:multihead}

\paragraph{Lemma 6.} $k$ parallel OVP (or TVPP, or BHCP, or BHFP) problems (each has two sets with n binary vectors) require $O(kn^{2-\epsilon})$ time for any $\epsilon$.

\begin{proof}
Suppose there is an algorithm for k parallel OVP (TVPP, BHCP, BHFP) problems better than $O(kn^{2-\epsilon})$ time. 

Say $A_i$ and $B_i$ are the sets of binary vectors with size $n$ for any $i \in [k]$.  

For each $t \in [k]$, look at these k parallel OVP (TVPP, BHCP, BHFP) problems:
$$(A_1, B_t) ,(A_2, B_{t+1}), \cdots ,(A_k, B_{t+k-1}) \textnormal{, where } B_{l+k}=B_l$$ 
Because of the assumption, there is an algorithm better than $O(kn^{2-\epsilon})$ time. 

So that, there is an $O(k \times kn^{2-\epsilon})$-time algorithm that solves OVP problem of ($A=A_1 \cup \cdots A_k$, $B=B_1 \cup \cdots B_k$). In other words for the OVP (TVPP, BHCP, BHFP) problem with sets of size $nk$ binary vectors has an algorithm in $O(kn^{2-\delta})$-time (by selecting $\delta=\epsilon/2$ and $k<n$). This contradicts SETH. As a result,  $k$ parallel OVP (TVPP, BHCP, BHFP) problems require $O(kn^{2-\epsilon})$ time for any $\epsilon$.

This lemma proves that the ``direct sum" of computational complexity for OVP (TVPP, BHCP, BHFP) problems is valid.

\end{proof} 

\section{Discussion of Additive Approximation}
\label{app:Additive}

This part is depends on for a given $C$ value, and the selected $\delta$ and $\Delta$ values, the difference $\Delta-\delta$ is positive. So that, this allows us to select an additive error $\mu$.

The following theorem shows that the we cannot reach better elementwise additive approximation than $e^{-2d \log(n+2)}$ for the $\ell_2$ distance self-attention.

\paragraph{Theorem 5.} Assume SETH. For any $i,j \in [n]$, let $S_{ij}=f(Q_i,Q_j)=e^{-C \| Q_i-Q_j  \|_2^2  }$, where $C$ is a parameter and for any matrix $M\in R^{n\times n}$. Let $h(M)=M$ and $Y=h(S)$. $V \in \R^{n\times d_v }$ be self-attention. Then for any $\epsilon>0$, computing a matrix $\hat{Y} \in R^{n\times d_v }$  that satisfies the following
condition requires $\Omega(n^{2-\epsilon})$ time when $d_q=\omega(\log n)$:
$|\hat Y_{ij}-Y_{ij} |\leq \mu$ for all $i\in [n]$ and $j\in [d_v]$, where $0 \leq\mu\leq e^{-2d \log(n+2) }=(n+2)^{-2d}$ (additive approximation)

\begin{proof}Consider two sets $A = \{a_1,\dots,a_n\}$ and $B= \{b_1,\dots,b_n\}$ given in BHCP, where $a_i,b_i  \in \{0,1\}^d$ are binary vectors for all $i\in [n]$. The problem is to decide if there exists at least a pair $a\in A$ and $b\in B$ that satisfies $\|a-b\|_2^2 <t$ for a given $t\in [n]$.

Suppose that the matrices $Q$ and $V$ are constructed according to BHCP vector gadgets described in the section 4.2. With this we have  $Y=h(S)\cdot V \in R^{n\times 1}$. Since $Y$ is a vector, we abuse the notation again and define $Y_i$ as the $i$th element of $Y$. Now consider the first n elements of $Y$. Since $Y=SV$, we have that $Y_i=\sum_{j=1}^n e^{-C \| a_i-b_j  \|_2^2 }$ for any $i,j \in [n]$.

Case 1. There are no pairs $a\in A$ and $b\in B$ that satisfies $\|a-b \|_2^2 <t$,  that is for all $i,j\in[n]$, we have $\|a-b \|_2^2 \geq t$, and $e^{-C \| a_i-b_j  \|_2^2 }\leq e^{-Ct}$. Then for all  $l\in [n]$,  $Y_l\leq n e^{-Ct}$. 
It is given that $|\hat{Y}_l-Y_l\leq \mu$, so $\hat{Y}_l\leq Y_l+\mu\leq n e^{-Ct}+\mu \leq n e^{-Ct} + e^{-2d \log(n+2)}=:\delta$.

Case 2. There is a pair  $a\in A$ and $b\in B$ with  $\|a-b \|_2^2 <t$, that is for some $i,j\in [n]$, we
have $\|a-b \|_2^2 \leq t-1$, and $e^{-C \| a_i-b_j  \|_2^2 }\geq e^{-C(t-1)}$. Thus for some $l\in[n]$, $Y_l\geq ne^{-C(t-1)}$.  
It is given that $|\hat{Y}_l-Y_l\leq \mu$ , so $\hat{Y}_l\geq Y_l-\mu\leq n e^{-Ct}-\mu\geq n e^{-Ct}-e^{-2d \log(n+2)}=:\Delta $.

In order to distinguish between two cases, it is sufficient to have $\Delta>\delta$. This holds with $C=2\log(n+2)$.

Thus, if there is an algorithm for computing self-attention up to an element-wise additive error $\mu \leq e^{-2d \log(n+2) }$ that runs in  $O(n^{2-\epsilon})$ time, this entire process takes at most $O(n+nd+n^{2-\epsilon})=O(n^{2-\epsilon})$  as long $d=O(n^{1-\epsilon})$, and this algorithm decides if there exists at least a pair of vectors $a\in A$ and $b \in B$ with  $\|a-b\|_2^2<t$ in $O(n^{2-\epsilon})$  time, which contradicts the hardness result of BHCP (Lemma 3). This completes the proof.
\end{proof} 

This additive approximation error can be improved slightly, but its order must be $O(n^{-d})$.

The following theorem shows that the we cannot reach better elementwise additive approximation than $e^{-3d \log(n)-3d^2}=n^{-3d}\cdot e^{-3d^2}$ for the softmax dot-product self-attention.

\paragraph{Theorem 6.} Assume SETH. For any $i,j \in [n]$, let $S_{ij}=f(Q_i,Q_j)=e^{ Q_i^T Q_j }$, and for any matrix $M\in \R^{n\times n}$ , let $h(M)\in \R^{n\times n}$ be the matrix where for all $i,j \in [n]$, $\{h(M)\}_{ij}=  \frac{M_{ij}}{\sum_{k_1}^n M_{ik}}$. Let $Y=h(S)\cdot V \in \R^{n\times d_v }$ be self-attention. Then provided $d_q=\omega(\log n)$, for any $\epsilon>0$, computing a matrix $\hat Y= \R^{n\times  d_v} $  that satisfies the following condition requires $\Omega (n^{2-\epsilon})$ time:

$|\hat Y_{ij}-Y_{ij}|\leq \mu$ for all $i\in[n]$ and $j\in[d_v]$, where $0 \leq \mu \leq e^{-3d \log(n)-3d^2  }$ (additive approximation)

\begin{proof}The proof is technically similar. Suppose that the matrices $Q$ and $V$ are constructed according to TVPP vector gadgets described in Section 4.2. With this we have $Y=h(S)\cdot V\in \R^{n\times 1}$. Consider the first $n$ elements of $Y$. Since $h(\cdot)$ act as the row-wise softmax function, we have 

\begin{align*}
Y_i = \sum_{j=n+1}^{2n} S_{ij} &=\sum_{j=1}^n \frac{e^{Ca_i^Tb_j}}{\sum_{k=1}^ne^{Ca_i^Ta_k}+\sum_{k=1}^ne^{Ca_i^Tb_k}}\\
&=\frac{\sum_{j=1}^n e^{Ca_i^Tb_j}}{\sum_{k=1}^ne^{Ca_i^Ta_k}+\sum_{k=1}^ne^{Ca_i^Tb_k}}
\end{align*}

Case 1. There are no pairs $a\in A$ and $b\in B$ that satisfies $a^T b\geq t$, that is for all $i,j\in[n]$, we have $a^T b\leq t-1$, and $\sum_{j=1}^n e^{Ca_i^Tb_j} \leq ne^{C(t-1)}$. For a function $\frac{x}{x+y}$, the maximum value is achieved at maximum $x$ and minimum $y$ values. Thus, for all $l\in [n]$,  $Y_l\leq \frac{ne^{C(t-1)}}{n e^{C(t-1)}+n}=\frac{e^{C(t-1)}}{e^{C(t-1)}+1}$.
It is given that $|\hat Y_l-Y_l|\leq \mu$ , so $\hat Y_l \leq Y_l+\mu \leq\frac{e^{C(t-1)}}{e^{C(t-1)}+1}+\mu\leq \frac{e^{C(t-1)}}{e^{C(t-1)}+1}+e^{-3d \log(n)-3d^2 }=:\delta$.

Case 2. There is a pair $a\in A$ and $b\in B$ with $a^T b\geq t$, that is for some $i,j\in [n]$, we have $a^T b\geq t$. Then the row sum corresponding to that $i,j$ pair is $\sum_{j=1}^n e^{Ca_i^T b_j }\geq e^{Ct}+(n-1)e^0=e^{Ct}+n-1$. For a function $\frac{x}{x+y}$, the maximum value is achieved at minimum $x$ and maximum $y$ values. Thus, for some $l\in[n]$, we have $Y_l\geq \frac{ e^{Ct}+(n-1)}{e^{Ct}+(n-1)+n e^d }$.

\end{proof}

\section{Discussion of Remark in Section 4.4}
\label{app:Remark}

In our results, we show the quadratic hardness of multiplicative error approximations self-attention matrix elements for dot-product softmax self-attention mechanism. One assumption we make on the approximation factor $\mu$ is that $\mu < 1$. Consider the value of the parameter $C = \log(\frac{2(1+\mu)}{1-\mu} n) + d$ in the proof of theorem \ref{thm:hardness-softmax-self-attn}. Notice that when $\mu = 1$, $C$ is not defined. In fact, having $\mu = 1$ implies that the $|\hat{Y}_l - Y_l| = |Y_l|$ for all $l \in [n]$, therefore one can approximate every entry of $\hat{Y}$ by $0$ in $O(n)$ time while satisfying this condition. However, as mentioned in the remark, one can set $\mu$ close to $1$ by setting it as $1-\frac{1}{n^x}$ for a constant $x$, while maintaining the condition on $C$ in the same order.

\end{document}